\documentclass{article}
\PassOptionsToPackage{numbers}{natbib}
\usepackage[final,main]{neurips_2026} 
\usepackage{algorithm}
\usepackage{algpseudocode}
\usepackage{amsmath}
\usepackage{amsfonts}
\usepackage{amssymb}
\usepackage{amsthm}
\usepackage{pifont}
\usepackage{multirow}
\usepackage[utf8]{inputenc} 
\usepackage[T1]{fontenc}    
\usepackage{url}            
\usepackage{booktabs}       
\usepackage{microtype}      
\usepackage[table]{xcolor}  
\usepackage{graphicx}
\usepackage{caption}
\usepackage{hyperref}       

\theoremstyle{plain}
\newtheorem{theorem}{\indent Theorem}[section]

\newtheorem{assumption}{\indent Assumption}[section]

\newtheorem{definition}{\indent Definition}[section]

\newcommand{\cmark}{\ding{51}}
\newcommand{\xmark}{\ding{55}}
\newcommand{\greencmark}{\textcolor{green!50!black}{\cmark}}
\newcommand{\redxmark}{\textcolor{red!70!black}{\xmark}}

\begin{document}

\title{\textsc{ShapleyContextPruning}: A Cooperative Game Perspective for Context Reranking and Pruning}

\author{%
  Yanqiao Chen$^{*}$, Dongsheng Hou$^{*}$, Yuhan Rui$^{*}$, Zhen Cao, Yepang Liu$^{\dagger}$ \\
  Southern University of Science and Technology \\
  \texttt{\{12412115, 12410421\}@mail.sustech.edu.cn} \\
  \texttt{ruiyuhan0110@gmail.com}, \texttt{12410102@mail.sustech.edu.cn} \\
  \texttt{liuyp1@sustech.edu.cn} \\
  $^*$Equal contribution. \quad $^{\dagger}$Corresponding author.
}

\maketitle

\begin{abstract}
Context reranking and pruning have become essential for improving the efficiency of modern \textbf{Retrieval-Augmented Generation (RAG)} systems, yet an \textbf{interpretable and unified framework} remains underexplored. Previous work has primarily emphasized lexical retrieval, cross-encoder architectures, model distillation, and Low-Rank Adaptation (LoRA), mostly relying on heuristic loss functions and empirical attribution. This paper presents \textbf{Shapley Context Pruning (SCP)}, a novel framework for context reranking that establishes a \textbf{cooperative-game-theory perspective} for importance attribution by modeling the context as a cooperative game. Balancing the trade-off between fine-grained and coarse-grained representations, we employ a Deep Sets architecture to approximate a permutation-invariant value function at the sentence level, utilizing pre-trained language models as sentence embedders and optimizing via a pairwise margin ranking loss. To ensure practical scalability without sacrificing mathematical rigor, we leverage \textbf{Monte-Carlo sampling} for efficient training and inference, providing formal \textbf{theoretical error bounds} and \textbf{sample complexity guarantees} for preserving Top-$K$ subset rankings. Furthermore, we conduct \textbf{comprehensive experiments}---spanning supporting-sentence recall, Needle-in-the-Haystack (NIAH) evaluations, long-context QA, and multi-hop reasoning---alongside rigorous ablation studies on embedding quality and attribution strategies. The model achieves competitive downstream QA performance against robust baselines. Notably, our framework decouples the attribution logic into an exceptionally lightweight \textbf{3M-parameter value network} prototype that operates on top of standard sentence embeddings, underscoring its modular efficiency and potential deployment feasibility as a scalable context filter. Additionally, we provide multiple case studies, including robustness justifications, analyzing real-world application scenarios. Beyond practical contributions, we propose an intuitive theoretical blueprint for context analysis---the \emph{``landscape of the context''}---which offers a structural perspective on information aggregation. Finally, based on our empirical observations, we formulate several hypotheses and open research questions for future investigation.
\end{abstract}

\section{Introduction}

\begin{figure} 
\centering 
\includegraphics[width=0.95\textwidth]{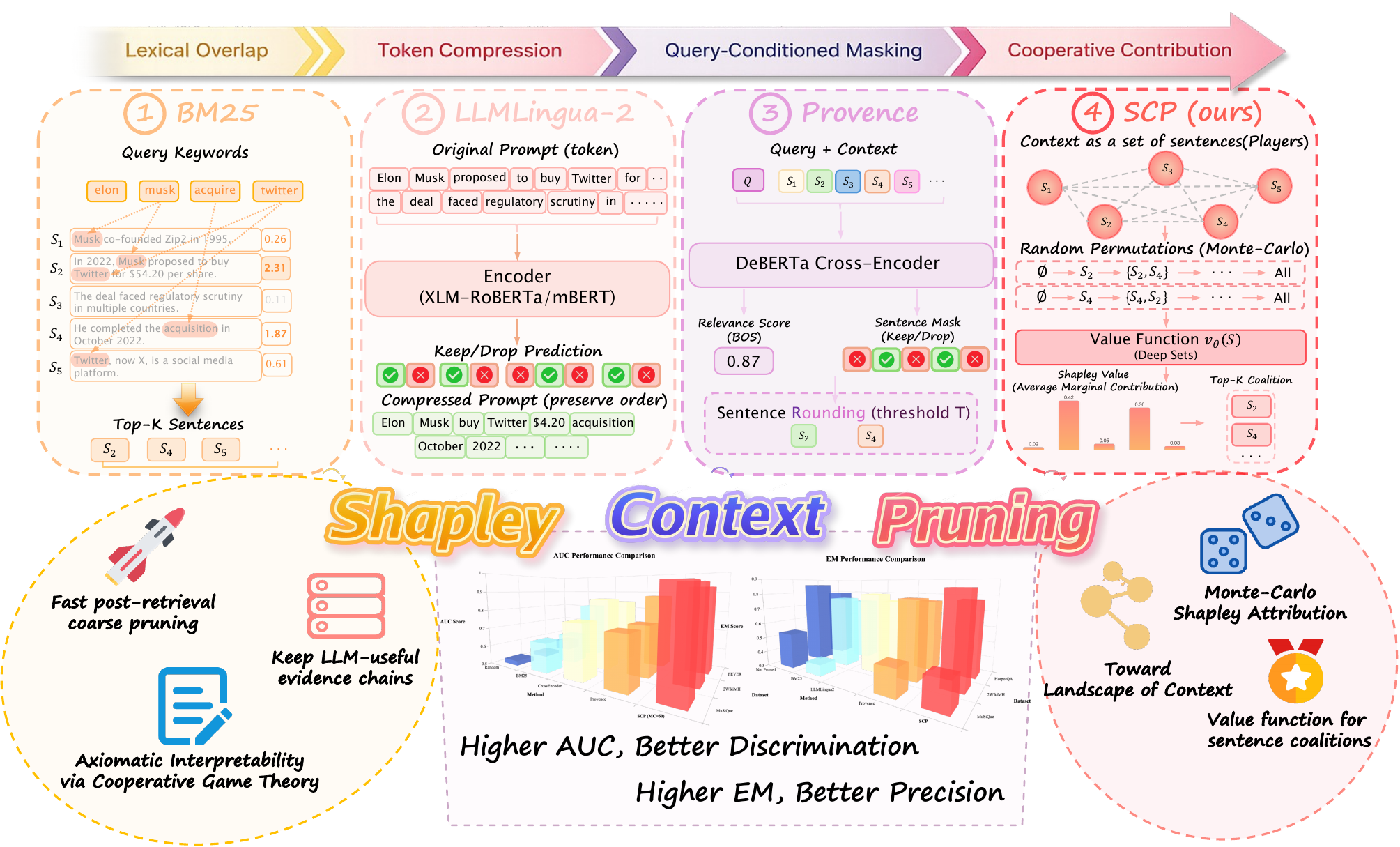}
\caption{An overview of concept evolution and how SCP different from the previous methods. SCP explicitly models interactions between components as a cooperative game rather than simple sets or sequences. Inspired by explainable AI (XAI) theory, SCP extend the restrictive methodology to a wider range of scenarios, i.e., context reranking.}
\label{fig:teaser}
\end{figure}

\begin{figure} 
\centering
\includegraphics[width=0.95\textwidth]{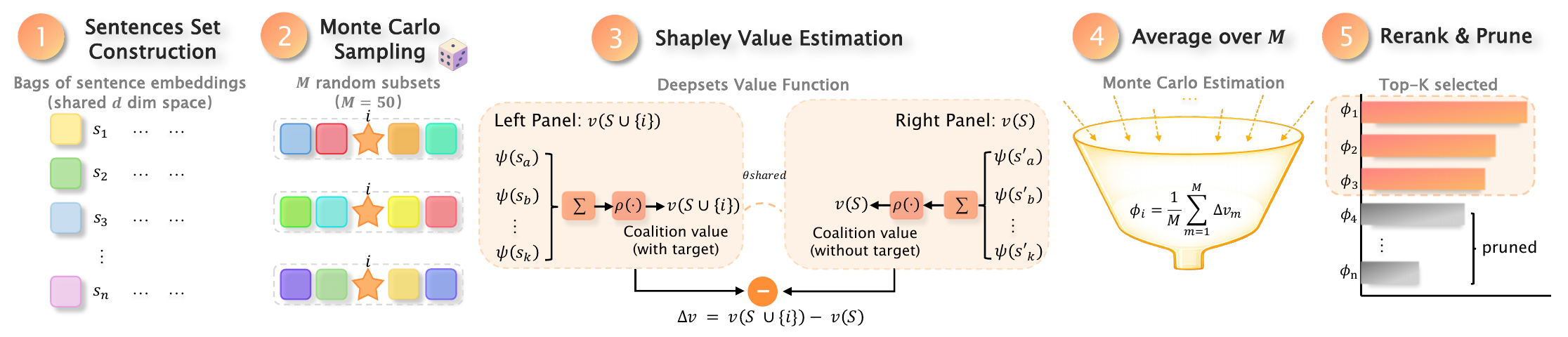}
\caption{Architecture Overview. The SCP framework consists of three main components: (i) Embedder: Can be any pre-trained embedding model. (ii) Value Function: Evaluating the value of a subset of sentences. (iii) Shapley Estimator: Estimating the Shapley value of each sentence based on the value function. The output is a ranked list of sentences based on their Shapley values, which can be used for pruning. Despite post-hoc explanation in XAI, SCP actively learns a value function that directly optimizes the ranking of supporting sentences, thus ensuring that the Shapley values are meaningful for the downstream task.}
\label{fig:archi}
\end{figure}

Retrieval-Augmented Generation (RAG) \cite{lewis2021retrievalaugmentedgenerationknowledgeintensivenlp} has emerged as a dominant paradigm for mitigating hallucinations and expanding the knowledge boundary of Large Language Models (LLMs). Modern RAG systems generally partition retrieval, reranking, and pruning into two stages: first, applying a coarse-grained model (e.g., BM25 \cite{robertson2009bm25}, Bi-Encoders \cite{reimers2019sentencebertsentenceembeddingsusing}) to efficiently filter candidate components; second, utilizing a fine-grained model (e.g., Cross-Encoders \cite{nogueira2020passagererankingbert}) to rerank and prune. However, as retrieved documents scale in length and complexity, processing expansive contexts incurs prohibitively high computational costs and often distracts models with irrelevant noise. Existing approaches frequently cascade language models with Low-Rank Adaptation (LoRA) or distillation techniques; however, their loss functions and attribution mechanisms predominantly rely on heuristics and empirical observations. Consequently, the absence of a systematic theoretical background leads to empirically driven designs, constraining the analytical rigor essential for transparent architectural improvements. In a recent survey \cite{mei2025surveycontextengineeringlarge}, researchers proposed an abstract theoretical formulation for context modeling, wherein context components are aggregated via an assembly function \(\mathcal{A}(c_1, c_2, \ldots, c_n)\). The objective is to identify a context-generating function that maximizes the expected quality of the LLM's ensuing output. This formulation intuitively parallels the concept of a ``coalition'' in game theory, directly motivating our integration of the Shapley value into the reranking stage.

In this paper, we propose Shapley Context Pruning (SCP), a novel framework that transcends the interpretability limitations of existing heuristic methods and leverages cooperative game theory to improve coarse-grained reranking accuracy. Our primary contributions are summarized as follows:

\paragraph{A Novel Attribution Perspective for Context Engineering} Deviating from conventional heuristic attribution techniques, we introduce the Shapley value at the sentence level to establish a robust, theory-inspired attribution mechanism. We naturally adopt a setwise value function with pairwise ranking supervision and listwise pruning output with the help of the Shapley value. Beyond implementation, this work provides a conceptual way to bridge engineering and theory via cooperative game theory. See Figure~\ref{fig:teaser} for overview. We discuss the broader vision and future directions for context engineering in Section~\ref{sec:shap_all}.

\paragraph{A Coarse-Grained Reranker Prototype for RAG Systems} We propose an operational Shapley-value-based reranking and pruning framework tailored for the preliminary context pruning stage—where the raw text is exceedingly long and demands an aggressive, coarse-grained filter. By reranking sentences according to their computed Shapley values, the mechanism systematically retains the most informative ``coalition'' of sentences that maximally contribute to downstream generation tasks. This approach offers a novel, highly scalable perspective on the efficiency--accuracy trade-off, prioritizing structural integrity over merely pursuing state-of-the-art benchmarks. See Figure~\ref{fig:archi} for the architecture overview.

\paragraph{Engineering Implementation and Comprehensive Experimental Validation} We validate SCP across a spectrum of rigorous benchmarks, encompassing supporting-sentence recall, Needle-in-the-Haystack (NIAH) tasks, long-context QA, and multi-hop reasoning. Our results confirm that SCP maintains competitive overall performance, validating its feasibility on datasets such as MuSiQue, 2WikiMH, and HotpotQA. To further validate our approach, we present extensive ablation studies analyzing the impacts of embedding choices, Monte-Carlo sampling budgets, and comparative attribution strategies against classical methods (e.g., Leave-One-Out). Additionally, we provide case studies and empirical robustness analyses. Crucially, to isolate and certify the efficacy of SCP ablation, we conduct evaluations independent of end-to-end RAG confounders.

Ultimately, the SCP framework introduces a versatile methodology applicable beyond context pruning—extending to general component reranking, feature selection, and software test-case optimization. By integrating rigorous theoretical grounding with scalable implementation, this framework contributes a robust analytical tool for the broader scientific community.

\section{Related Work}

Recent context pruning methods fall into four categories. \emph{(i)} Token-level prompt compression (LLMLingua-2 \cite{jiang2023llmlingua,pan2024llmlingua2} and Selective Context \cite{li2023compressing}) compresses prompts via surprisal-based selection, but relies on local token signals without cross-sentence cooperative modeling; we use LLMLingua-2 as our baseline. \emph{(ii)} BERT-based sentence-level pruning (Provence \cite{chirkova2025provence}) filters content using transformer-style adaptation. \emph{(iii)} Extractive and abstractive compressors (RECOMP \cite{xu2023recompimprovingretrievalaugmentedlms}) condense passages into new representations, diverging from strict sentence-selection reranking. \emph{(iv)} Task-specialized systems (SWE-Pruner \cite{wang2026swepruner}) show that structure-aware pruning yields gains in domain-specific settings, emphasizing the need to align pruning with downstream task structure.

In standard RAG pipelines, retrieval and reranking dictate the effective context length. Sparse retrieval (BM25 \cite{robertson2009bm25}) remains a strong lexical baseline, while dense retrieval (DPR \cite{karpukhin2020dpr}) and cross-encoder rerankers \cite{nogueira2020passagererankingbert} improve semantic matching at higher computational cost. Models like FiD \cite{izacard2021fid} demonstrate that inter-passage interactions are critical for multi-hop reasoning. Meanwhile, ``Lost in the Middle'' \cite{liu2024lost} and related surveys \cite{liu2025survey} reveal that LLMs underutilize long contexts, motivating selective, high-quality pruning before generation. Feature attribution studies also highlight a gap between local relevance and holistic contribution: attention weights \cite{clark2019bert} do not equate to feature importance, and gradient methods such as Integrated Gradients \cite{sundararajan2017ig} lack direct mappings to discrete, set-valued context selection.

Learning-to-rank work spans pointwise \cite{10.1007/11776420_44}, pairwise \cite{819548}, setwise \cite{Zhuang_2024}, and listwise \cite{10.1145/1273496.1273513} paradigms. OptiSet \cite{jiang2026optisetunifiedoptimizingset} pursues set-wise modeling by selecting optimal component combinations, aligning with our objective of optimizing discrete coalitions. Closer to our setting, LooComp \cite{do2026loocompleverageleaveoneoutstrategy} applies a Leave-One-Out (LOO) objective with LoRA for context pruning. We differ in two key ways: (i) we train an independent 3M-parameter Deep Sets value function from scratch, decoupling attribution from the embedder; (ii) our analysis (Appendix~\ref{sec:LOO}) shows LOO collapses under semantic redundancy, assigning near-zero scores to duplicated sentences, whereas Shapley attribution---averaging marginal contributions across all coalitions---inherently resists this failure mode. Applying Shapley attribution to subsets requires a permutation-invariant value function, for which Deep Sets \cite{zaheer2018deepsets} is the canonical choice; we keep the estimator lightweight to avoid representation bottlenecks under finite latent dimensions \cite{wagstaff2019limitations}.

The Shapley value \cite{shapley1953value}, originating in cooperative game theory, quantifies a component's expected marginal contribution across all coalitions---an attribution scheme that aligns naturally with context pruning, where a sentence inconsequential in isolation may prove critical when synergized with bridging evidence. Yet existing Shapley-based frameworks each sacrifice one key property: Data Shapley \cite{ghorbani2019datashapley} achieves interpretability and redundancy robustness but is fundamentally unscalable, since each evaluation of $v(S)$ requires retraining the underlying model; SHAP \cite{lundberg2017shap} explains a fixed model by attributing along feature dimensions rather than samples, and thus does not rank context units; TokenSHAP \cite{goldshmidt2024tokenshapinterpretinglargelanguage} operates at an overly fine token granularity with a hard-coded value function (e.g., TF-IDF cosine similarity), discarding richer semantic signals. SCP closes this gap by \emph{learning} a dedicated, permutation-invariant value function at the sentence level, jointly delivering scalability, interpretability, and redundancy robustness. Beyond pruning, this learned value function opens new directions for context engineering---hierarchical context structures, bridging-sentence analysis, and tree-based context management---which we explore in Appendix~\ref{sec:shap_all}.

In summary, while SCP builds upon established theoretical ingredients---Shapley values, Deep Sets, and cooperative games---its contribution lies in answering, for the first time, how Shapley attribution can be feasibly operationalized for context reranking, and in providing a novel blueprint toward more systematic and interpretable context pruning and management. See Table~\ref{tab:diff_shapley} for detailed comparison.

\section{Shapley Context Pruning (SCP)} 

\subsection{The Shapley value} 

We introduce the Shapley value \cite{shapley1953value} from cooperative game theory as the theoretical foundation of our context pruning method. The Shapley value is a solution concept in cooperative game theory that aims to fairly distribute the total gains (or costs) among players based on their individual contributions to the overall outcome.

\begin{definition}[Shapley Value]
Consider a cooperative game with a set of players \(N\) and a characteristic 
function \(v
: 2^N \rightarrow \mathbb{R}\) that assigns a value to each coalition of 
players. The Shapley value \(\phi_i(v)\) for player \(i \in N\) is defined 
as:
\[
\phi_i(v) = \sum_{S \subseteq N \setminus \{i\}} \frac{|S|!(|N| - |S| - 1)!}{|N|!} [v(S \cup \{i\}) - v(S)]
\]
where the sum is taken over all subsets \(S\) of \(N\) that do not include 
player \(i\).
\end{definition}

\paragraph{Rationale} We draw inspiration from the Shapley value to design a ranking-based importance estimator. Although our learned value function $v_\theta$ does not strictly satisfy all Shapley axioms due to its data-driven rather than axiomatic construction, the Shapley formulation serves as the conceptual foundation for our coalition-based importance aggregation. While we relax these axiomatic constraints to maximize representational capacity, reintroducing specific theoretical constraints via targeted training regularization presents a promising avenue for future research to bridge empirical flexibility with formal guarantees. See Section~\ref{sec:shap} for a formal description of the four Shapley properties and how these properties correspond to real context reranking. 

\subsection{The Value Function} 

Constructing an accurate and tractable value function \(v(S)\) is critical for deriving Shapley values. Specifically, we use a supervised learning paradigm to map any chosen subset of contextual passages to an expected retrieval/generation performance metric (e.g., QA accuracy or span recall). Given that $S$ is an unordered set, the value function should satisfy permutation invariance. We therefore adopt the Deep Sets architecture.

\begin{definition}[Deep Sets \cite{zaheer2018deepsets}]
A function \(f: 2^{\mathcal{X}} \to \mathcal{Y}\) acting on sets is 
permutation invariant if \(f(\{x_1, \ldots, x_M\}) = f(\{x_{\pi(1)}, \ldots, x_{\pi(M)}\})\) for any permutation \(\pi\). The Deep Sets theorem states that a function \(f\) operating on a set \(X\) having permutation invariance can be universally approximated by decomposing into the form:
\[ f(X) = \rho \left( \sum_{x \in X} \psi(x) \right) \]
where \(\psi: \mathcal{X} \to \mathbb{R}^d\) and \(\rho: \mathbb{R}^d \to \mathcal{Y}\) are continuous transformations.
\end{definition}

In our context, the parameterized value function operates as $v(S; \theta) = \rho \left( \sum_{c_i \in S} \psi(c_i) \right) $ where the feature extractor \(\psi\) independently maps each context component \(c_i\) (the original sentence and the query) to a latent embedding, and the regressor \(\rho\) maps the additively aggregated representations to a scalar performance value. Both components are Multi-Layer Perceptrons (MLPs) parameterized by \(\theta\).

The training paradigm is structured as follows: \emph{(i)}Input: The embedded sentence set \(S \subseteq C\) paired with a specific reasoning query. \emph{(ii)}Output: A scalar prediction indicating the inferred value (relative importance) of the subset $S$ for answering the given query. Since obtaining precise real-valued targets requires excessive LLM queries, we supervise the optimization using a pairwise margin ranking loss. This objective only forces the network to correctly rank optimal evidence passages above distractors, rather than memorize exact performance figures:
\[
L(\theta) = \frac{1}{|P| \cdot |N|} \sum_{i \in P} \sum_{j \in N} \max(0, \epsilon - (\phi_i(v_\theta) - \phi_j(v_\theta)))
\]
where $P = \{k : y_k = 1\}$ is the set of positive supporting facts, $N = \{k : y_k = 0\}$ is the set of negative distractors, and $\epsilon > 0$ is the margin hyperparameter. In real training datasets, these sentences are annotated by human annotators. This formulation directly binds the Shapley value calculation into the backward pass, ensuring that relevant documents maintain a Shapley value margin of $\epsilon$ over irrelevant ones: $\phi_i > \phi_j$ whenever $y_i > y_j$. See Section~\ref{sec:per_inv} for justification.

\subsection{Efficient Shapley Estimation via Monte-Carlo Sampling}

Scaling exact Shapley value derivation poses substantial latency challenges for long-context scenarios, as it mandates $\mathcal{O}(2^n)$ model forward passes. To mitigate this bottleneck, we use an optimized Monte-Carlo sampling approximation \cite{castro2009polynomial}.

Rather than exhaustively querying the combinatorial space, the algorithm uniformly samples independent context permutations and accumulates the marginal delta contributions dynamically. We provide Algorithm \ref{alg:mc} in pseudocode form for clarity.

\begin{algorithm}
\captionsetup{labelformat=empty}
\caption{\colorbox{yellow!15}{\parbox{\dimexpr\linewidth-2\fboxsep\relax}{\textbf{Algorithm~\thealgorithm} Monte-Carlo Shapley Estimation}}}
\label{alg:mc}
\begin{algorithmic}[1]
\Require A context \(C = \{c_1, \ldots, c_n\}\), optimized value function \(v_\theta\), permutation count \(M\)
\Ensure Estimated Shapley values \(\{\hat{\phi}_1, \hat{\phi}_2, \ldots, \hat{\phi}_n\}\)
\State Initialize Shapley estimates \(\hat{\phi}_i = 0\) for all \(i \in \{1, \dots, n\}\)
\For{each sample \(m = 1\) to \(M\)}
    \State Sample a uniform random permutation \(\pi\) of the set \(C\)
    \State Initialize an expanding coalition subset \(S = \emptyset\)
    \For{each component \(c_{\pi(j)}\) according to \(\pi\)}
        \State Compute marginal contribution: \(\Delta v \gets v_\theta(S \cup \{c_{\pi(j)}\}) - v_\theta(S)\)
        \State Accumulate onto attribution: \(\hat{\phi}_{\pi(j)} \gets \hat{\phi}_{\pi(j)} + \frac{\Delta v}{M}\)
        \State Augment coalition: \(S \gets S \cup \{c_{\pi(j)}\}\)
    \EndFor
\EndFor
\State \textbf{return} \(\{\hat{\phi}_1, \hat{\phi}_2, \ldots, \hat{\phi}_n\}\)
\end{algorithmic}
\end{algorithm}

This estimator converges to the analytical Shapley value as the number of sampled
permutations increases, while capping forward evaluations at $\mathcal{O}(Mn)$. We prove the following error bound for the Shapley value estimation, under the assumption stated below. See Appendix A.4 for the proof.

\begin{assumption}[Bounded Value Function]
\label{ass:bounded}
The trained value function $v_{\theta}$ and the true value function $v^*$ are uniformly bounded: there exists a constant $B > 0$ such that for every coalition $S \subseteq N$,
\[
0 \leq v_{\theta}(S) \leq B \quad \text{and} \quad 0 \leq v^*(S) \leq B.
\]
In practice, this can be enforced by a sigmoid output layer or by clipping the predictions to $[0, B]$.
\end{assumption}

\begin{theorem}[Error Bound for Pruning Attribution]
\label{thm:error-bound}
Let $v^*$ be the true value function and $v_{\theta}$ a trained approximation satisfying Assumption~\ref{ass:bounded}. Assume the point-wise approximation error is uniformly bounded by $\epsilon_{\mathrm{approx}}$, i.e.,$|v^*(S) - v_{\theta}(S)| \leq \epsilon_{\mathrm{approx}}, \quad \forall S \subseteq N.$ Let $\hat{\phi}_i$ be the Monte-Carlo Shapley estimate of player $i$ obtained from $M$ independent uniform permutations (Algorithm~1). Then, with probability at least $1-\delta$ over the random permutations, the following holds simultaneously for all $i \in N$:
\[
\bigl|\hat{\phi}_i(v_{\theta}) - \phi_i(v^*)\bigr| \;\leq\; B\sqrt{\frac{2\ln(2n/\delta)}{M}} \;+\; 2\epsilon_{\mathrm{approx}}.
\]
\end{theorem}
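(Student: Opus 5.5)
The plan is to split the error with the triangle inequality into a sampling term and an approximation term:
\[
\bigl|\hat{\phi}_i(v_{\theta}) - \phi_i(v^*)\bigr| \le \bigl|\hat{\phi}_i(v_{\theta}) - \phi_i(v_{\theta})\bigr| + \bigl|\phi_i(v_{\theta}) - \phi_i(v^*)\bigr|.
\]
I will bound the first term with a concentration inequality plus a union bound over players, and the second term deterministically using linearity of the Shapley value in $v$.

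For the sampling term, fix a player $i$. For the $m$-th sampled permutation $\pi_m$, let $S_m$ be the set of predecessors of $i$ in $\pi_m$. Define
\[
X_m = v_{\theta}(S_m \cup \{i\}) - v_{\theta}(S_m),
\]
so that Algorithm~\ref{alg:mc} returns $\hat{\phi}_i = \frac{1}{M}\sum_{m=1}^M X_m$.

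\begin{itemize}
\item \emph{Unbiasedness.} Under a uniform random permutation, the predecessor set $S$ of $i$ has probability exactly $\frac{|S|!(n-|S|-1)!}{n!}$. Hence $\mathbb{E}[X_m] = \phi_i(v_{\theta})$, by the definition of the Shapley value.
\item \emph{Independence and range.} The $X_m$ are i.i.d.\ because the permutations are independent. By Assumption~\ref{ass:bounded}, each $X_m$ lies in $[-B,B]$, an interval of length $2B$.
\item \emph{Hoeffding.} Hoeffding's inequality gives
\[
\Pr\bigl(|\hat{\phi}_i - \phi_i(v_{\theta})| \ge t\bigr) \le 2\exp\!\bigl(-2Mt^2/(2B)^2\bigr) = 2\exp\!\bigl(-Mt^2/(2B^2)\bigr).
\]
Setting the right-hand side to $\delta/n$ gives $t = B\sqrt{2\ln(2n/\delta)/M}$.
\item \emph{Union bound.} A union bound over the $n$ players makes the bound hold for all $i$ simultaneously with probability at least $1-\delta$. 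The same permutations are shared across players in Algorithm~\ref{alg:mc}, so the $\hat{\phi}_i$ are dependent across $i$. The union bound does not need independence across players, so this is harmless.
\end{itemize}

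For the approximation term, I will use linearity of $\phi_i$ in the characteristic function:
\[
\phi_i(v_{\theta}) - \phi_i(v^*) = \sum_{S \subseteq N\setminus\{i\}} w_S \bigl[(v_{\theta}-v^*)(S\cup\{i\}) - (v_{\theta}-v^*)(S)\bigr],
\]
where $w_S = \frac{|S|!(n-|S|-1)!}{n!}$. Each bracket is at most $2\epsilon_{\mathrm{approx}}$ in absolute value by the uniform error assumption. The weights are nonnegative and sum to $1$, since they form the distribution of the predecessor set. Therefore the term is at most $2\epsilon_{\mathrm{approx}}$ deterministically, and combining the two bounds gives the claimed inequality.

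I expect no serious obstacle. The step needing the most care is the sampling term: one must state cleanly that the estimator is an average of i.i.d.\ bounded variables with the correct mean, and use the range $2B$ rather than $B$. The range $2B$ is exactly what produces the constant $2$ inside the square root.
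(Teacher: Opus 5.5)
Your proposal is correct and follows essentially the same route as the paper. The paper also splits the error by the triangle inequality and bounds the approximation term deterministically by $2\epsilon_{\mathrm{approx}}$, using additivity of the Shapley value and its convex-combination weights. It bounds the sampling term with Hoeffding for i.i.d.\ marginal contributions in $[-B,B]$ plus a union bound over the $n$ players. Your remark that the shared permutations make the $\hat{\phi}_i$ dependent across players, which the union bound tolerates, is a useful clarification that the paper leaves implicit.
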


The theorem shows that the training error and the estimation error are decoupled: they are additive, not multiplicative. See~\ref{fig:error_bound} for a visualization. From the error-bound theorem, as $n$ is around 10 for multi-hop reasoning tasks, we apply $M = 50$ for efficient training and inference. However, higher sampling counts are recommended. See the ablation study for further details.

\subsection{The General Shapley Context Pruning Algorithms}

The objective of context pruning is to preserve the useful components and filter the distractors.

\begin{definition}[Context Pruning Problem (sentence-level)] Given a set of context components (e.g., sentences or paragraphs) \(C = \{c_1, c_2, \ldots, c_n\}\), the objective is to learn a binary decision vector $D \in \{0,1\}^n$, where $D_i = 1$ denotes retaining the component $c_i$ and $D_i = 0$ denotes discarding it.
\end{definition} 

Once the robust value function $v_\theta$ is empirically optimized, calculating the contribution of any constituent part relies on establishing the Shapley value $\phi_i(v_{\theta})$. Subsequently, pruning reduces to a straightforward filtration process based on these generated utility scores. We outline the Shapley Context Pruning procedure via empirical criteria: an absolute top-K constraint depicted in Algorithm \ref{alg:shapley_pruning_topk}.

\paragraph{Rationale} Since the algorithm operates at a coarse-grained level, the pruner must reliably filter out the most irrelevant sentences while retaining the most informative ones; we therefore use a less aggressive pruning ratio for this preliminary stage. We provide a bound for Monte-Carlo sampling number for Top-K pruning in the appendix. See Theorem~\ref{thm:ranking-preservation}.

\section{Experiment} 

Our experiments are designed as feasibility studies: we seek to validate that a cooperative-game-theoretic formulation can produce meaningful, computationally tractable context rankings, rather than to claim state-of-the-art compression rates and accuracy.

\subsection{Experiment Setup} 

Our standard Shapley Context Pruning (SCP) model uses a computationally lightweight Deep Sets architecture with a hidden dimension of 1024 and a latent dimension of 512, totalling only 3.03M parameters (do not contain embedding parameters, by default the total parameters in the pipeline are 25.3M). The default embedding model for the SCP model is MiniLM \cite{wang2020minilmdeepselfattentiondistillation}. To be specific, for baseline "Cross Encoder", we use \texttt{ms-marco-MiniLM-L6-v2} \cite{hf2021msmarcominilm} as Cross Encoder, which is different from our embedder \texttt{all-MiniLM-L6-v2}. We train the network on the MS MARCO \cite{bajaj2016ms}, HotpotQA \cite{yang2018hotpotqa}, 2WikiMultiHop \cite{ho2020constructing}, MuSiQue \cite{trivedi2022musique} and FEVER \cite{thorne2018fever} datasets for 10 epochs using an AdamW \cite{loshchilov2019decoupledweightdecayregularization} optimizer (learning rate 1e-3, cosine schedule). By default, models are trained on the corresponding dataset (80\% for training, 10\% for validation, 10\% for testing). The pairwise margin $\epsilon$ is set to 0.15. Compression rate is computed by token count, while keep rate is computed by the number of sentences; therefore, they are not exactly the same. The Pairwise \texttt{AUC}(Area Under Curve) metric, or Pairwise Ranking Accuracy, evaluates the performance of ranking the supporting sentences before the distractors. \texttt{R@0.X} metric represents the percentage of supporting sentences preserved, i.e., $\frac{\text{Preserved Supporting Sentences}}{\text{Total Supporting Sentences}}$. \texttt{EM} metric is short for "Exact Match", computed by strict string matching of the LLM-output and true answers. \texttt{F1} metric is the harmonic mean of the recall and precision. The datasets, baseline methods (and the exact version we use), and inference LLMs are listed in Table~\ref{tab:setup}. The LLM QA pipeline is as follows: the context is embedded at the sentence level, the sentences are reranked and pruned, then fed to the LLM for inference. After that, we apply an EM/F1 evaluation script or use an LLM-as-judge to output a performance score.

\subsection{Main Result}

Figure~\ref{fig:main_results} summarizes the empirical evaluation of SCP across four dimensions: reranking quality, downstream LLM QA accuracy, cross-model robustness, and long-context needle recall.

\begin{figure*}[t]
\centering
\includegraphics[width=1\textwidth]{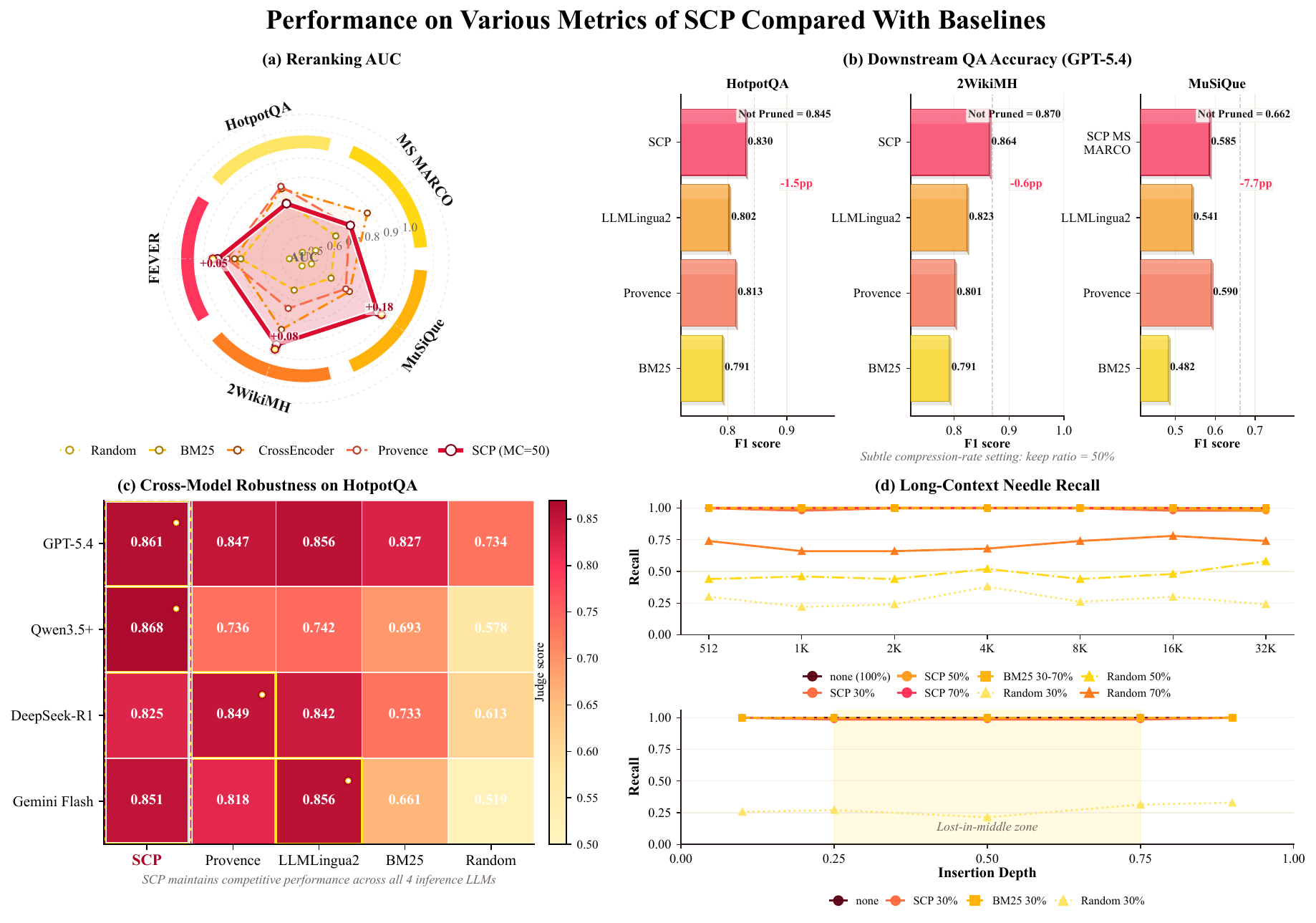}
\caption{Overview of SCP empirical results. (a)~Reranking AUC (Pairwise Ranking Accuracy) comparison; SCP achieves the highest AUC on MuSiQue, 2WikiMH, and FEVER. (b)~Downstream LLM QA F1 under 50\% compression. (c)~Cross-model judge-score heatmap on HotpotQA; highest score per model is bolded and boxed. (d)~NIAH needle-recall curves across context lengths (top) and insertion depths (bottom). BM25 dominates fact recall in long contexts since the "needle" typically contains query keywords, a scenario well-suited to BM25's sparse retrieval mechanism.}
\label{fig:main_results}
\end{figure*}

The original main result is shown in Table~\ref{tab:auc_master}, Table~\ref{tab:llm_qa_master}, Table~\ref{tab:hotpotqa_mmmp}, Table~\ref{tab:niah_length} and Table~\ref{tab:niah_depth}. We report the AUC of the reranking stage and the downstream QA performance of the LLMs. The AUC metric directly reflects the quality of the reranking stage, while the downstream QA performance reflects the end-to-end effectiveness of the pruning method.

For the MuSiQue rows of Table~\ref{tab:llm_qa_master}, we test four SCP variants: trained at paragraph level and pruning at paragraph level (\textit{paragraph*}); trained at sentence level and pruning at sentence level (\textit{sentlevel split}); trained at sentence level and pruning at paragraph level (\textit{sentlevel no split}); and trained on MS MARCO and pruning at sentence level (\textit{MS MARCO, cross}). We observe a misalignment between AUC and downstream LLM QA accuracy on MuSiQue for in-domain trained models, which is caused by the coarse-grained annotation from the dataset itself. The Cross Encoder used here is a variant of MiniLM. See Section~\ref{AUC_Just} for a detailed justification.

We also conduct experiments on CL-Bench and RULER (a benchmark for Needle-In-A-Haystack experiment, NIAH) to test the performance of SCP on long-context tasks. See Tables~\ref{tab:clbench_rubric}, \ref{tab:clbench_error_type_bm25_scp}, \ref{tab:clbench_ablation}, \ref{tab:clbench_overall}, \ref{tab:clbench_rubric_bm25_scp} and Figure~\ref{fig:clbench_overall} for CL-Bench and Tables~\ref{tab:niah_depth}, \ref{tab:niah_key_retrieval}, \ref{tab:niah_length}, \ref{tab:niah_mk5_loo_scp} for NIAH. The result shows that SCP outperforms BM25 on CL-Bench and equal to BM25 on RULER, showing the effectiveness of our approach in long context fact recalling and key sentence preservation. We provide the detail results and analysis in the appendix. We also provide a case study in real world in Tables~\ref{tab:casestudy_neuro_setup}, \ref{tab:casestudy_neuro_analysis}, \ref{tab:casestudy_neuro_acq}, \ref{tab:neuro_scp_diagnosis}.

The main results show that SCP reaches competitive performance with fewer parameters (even when accounting for the embedder parameters), and outperforms baselines on specific LLMs and datasets. These experiments validate our method as an engineering contribution; further improvements to fully realize SCP's potential in real-world tasks are left to future work.

\subsection{Ablations}

Figure~\ref{fig:ablation_results} presents the ablation studies, covering embedding quality robustness, Monte-Carlo sampling budget, Shapley versus LOO attribution, and a concrete redundancy failure mode.

\begin{figure*}[t]
\centering
\includegraphics[width=1\textwidth]{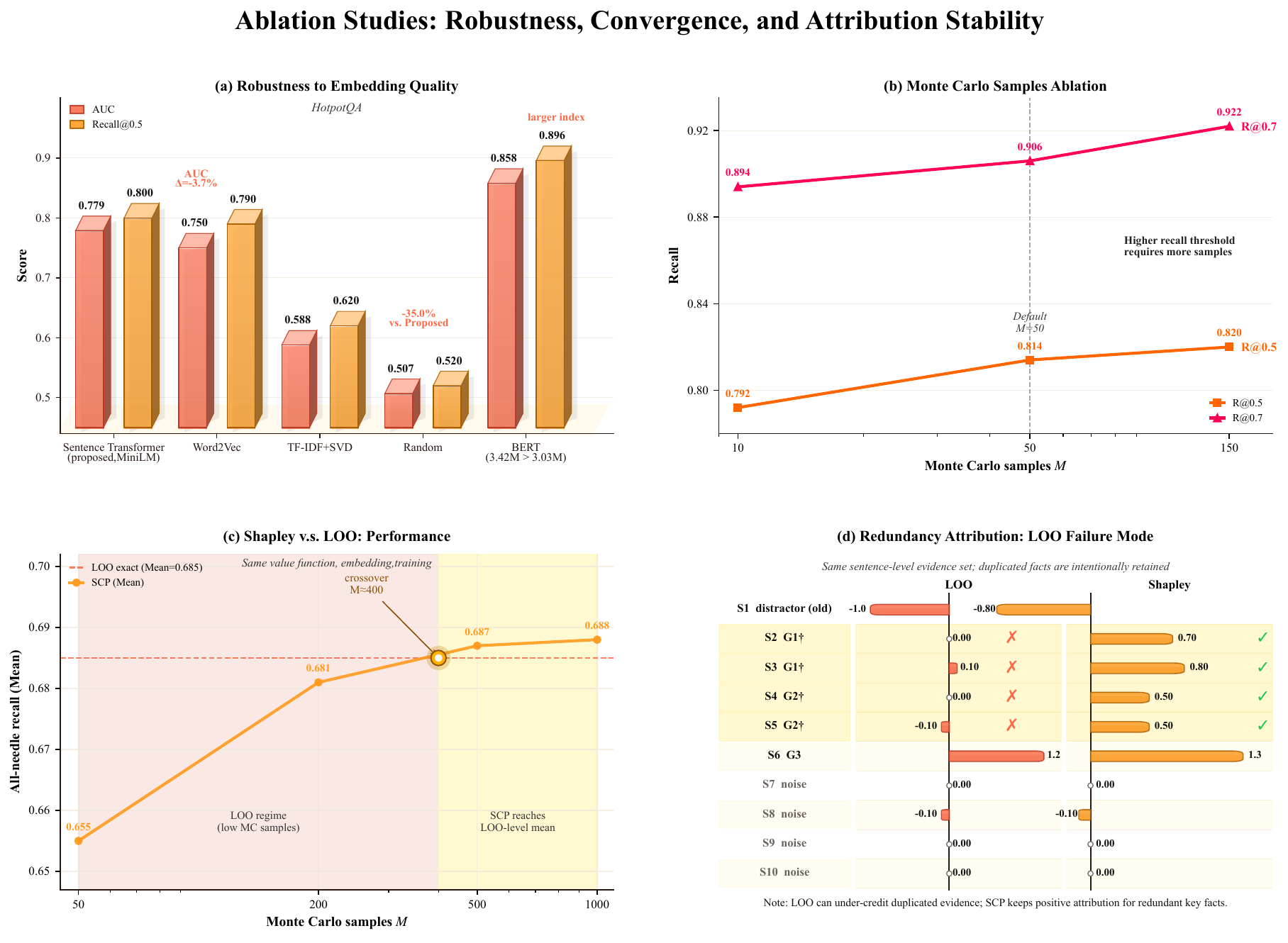}
\caption{Ablation studies. (a)~Embedding mechanism comparison; the default SentenceTransformer and BERT outperform shallow embeddings. SCP performs robustness under lower parameters embedder, i.e., MiniLM. Model Size: MiniLM(22M), BERT(110M). The model size of SCP will change slightly for embedding dimension alignment, Word2Vec(300dim), MiniLM(384dim), BERT(768dim). The total parameters of the value function are (3M $\pm$ 0.5M) in this experiment. (b)~Recall improves with more Monte-Carlo samples; the default $M=50$ is marked. (c)~SCP mean recall approaches and crosses LOO exact mean around $M \approx 400$. (d)~LOO under-credits duplicated supporting facts, whereas SCP preserves positive attribution.}
\label{fig:ablation_results}
\end{figure*}

\subsubsection{Embedders} 

A natural concern is whether SCP's effectiveness stems merely from the pre-trained embedding model that explicitly encodes queries. To rigorously decouple the source of SCP's robustness, we ablate the initial sentence-embedding input mechanism. We implement experiments on MiniLM, \texttt{bert-base-uncased} \cite{devlin2019bertpretrainingdeepbidirectional}, \texttt{word2vec-google-news-300} \cite{mikolov2013distributedrepresentationswordsphrases} and TF-IDF \cite{10.1016/0306-45738890021-0}. Table~\ref{tab:ablation_embed} shows the results. The results show that SCP is robust across modern embedding models. Though embedding quality does impact performance as expected, the similar AUC performance on MiniLM and Word2Vec shows that embedding dimension rather than embedding mechanism is the factor that determines the performance of SCP. See Section~\ref{sec:feat-ext} for more analysis.

\subsubsection{Attribution Strategies Comparison}

In this ablation, we compare the performance of SCP using the Shapley value attribution mechanism against a simpler Leave-One-Out (LOO) attribution method. The LOO method evaluates the marginal contribution by simply removing one element from the full coalition, without averaging over all permutations. We shows that LOO strategy is reasonable choice for low computational budget, but not optimal for higher accuracy requirements and robust performance in Table~\ref{tab:niah_mk5_loo_scp}, \ref{tab:loo_vs_shapley_updated}. See Section~\ref{sec:LOO} for the full analysis and Table~\ref{tab:redatk_g3} for a concrete example of the redundancy failure mode of LOO.

\subsubsection{Monte-Carlo Sampling Number}

Table~\ref{tab:msmarco_sample}, \ref{tab:niah_mk5_loo_scp} shows that SCP's accuracy improves as the Monte-Carlo sampling count increases. Table~\ref{tab:msmarco_sample} indicates that SCP is appropriate as a coarse-grained reranker at higher keep ratios, where it filters most low-value noise and lets a finer-grained reranker/pruner handle the preliminary processed context. However, it is worth noting that we do not take the embedding time into consideration.

\section{Discussion and Future Work}

\subsection{Limitations}

Although the SCP framework shows competitive performance in our validation experiments and provides a promising step toward a theoretical blueprint, several limitations remain to be addressed:

\textbf{Suboptimal Pruning Criteria:} In the current implementation, we predominantly rely on Top-K heuristics as the final pruning criterion; an adaptive threshold for SCP is not proposed in this paper.

\textbf{Estimation Error and High Variance in Different Scenarios:} Although the framework's error is bounded by Theorem~\ref{thm:error-bound}, an appropriate sampling count for different scenarios has not yet been characterized in this work. We did not implement multiple runs in LLM QA tasks due to API budget constraints. However, experiments in different scenarios have shown the feasibility of SCP framework.

\textbf{The Gap between theoretical structure and engineering performance:} We introduce the concept ``Landscape of Context'', which is a theoretical structure for modeling the context. See Figure~\ref{fig:landscape_of_context} for an illustration. However, such structures needs supermodular value functions. As an initial step toward closing this gap, we 
propose \texttt{ConvexDeepSetsV2} in Appendix~\ref{sec:convexity} and Table~\ref{tab:convexv2_comparison}. 
to enforce input convexity in the value function. However, fully 
reconciling theoretical guarantees and empirical performance 
remains an open question.

\subsection{Future Work}

Our study bridges cooperative game theory with context reranking, providing several promising direction for future research:

\begin{itemize}
    \item \textbf{Advanced Interaction-Aware Pruning Algorithms:} Addressing the limitations outlined above, future work could explore self-adapting thresholding pruning algorithms that may dynamically change $K$ under different scenarios. Furthermore, theoretical insights in Section~\ref{sec:shap_all} may provide structural priors for designing more efficient pruning algorithms.
    \item \textbf{Theoretical Structure implications:} The cooperative game formulation admits a rich analytical toolkit: "core" provides structural insights into stable context configurations, the supermodularity suggests a hierarchical multi-granularity pruning strategy (Appendix~\ref{sec:shap_all}). Future work could empirically validate how these structures manifest in real-world retrieval scenarios.See more in Section~\ref{sec:more_future}.
    \item \textbf{Extending Beyond Context Engineering:} The SCP framework's generality suggests applications beyond context pruning, such as feature selection in software test-case prioritization. Exploring these extensions could further demonstrate the versatility of cooperative game-theoretic approaches.
\end{itemize}

\section{Conclusion}

This paper introduces Shapley Context Pruning (SCP), framing context reranking as a cooperative game. A lightweight Deep Sets network with pairwise ranking and Monte-Carlo sampling makes Shapley attribution scalable. Empirically, SCP outperforms baselines on multi-hop benchmarks while remaining interpretable via approximately axiomatic attribution. Beyond, this paper provides a conceptual guideline towards future work in context engineering. These results bridge cooperative game theory with practical context engineering for scalable, interpretable long-context processing.

\bibliographystyle{plainnat}
\setcitestyle{numbers}
\bibliography{references}

\newpage
\appendix

\section{Towards a Theoretical Framework: A Cooperative Game-Theoretic Perspective}\label{sec:shap_all} 

In this section, we outline a promising conceptual blueprint beyond the empirical experimental validation of this paper. Our primary goal here is to inspire future researchers by demonstrating how the rich mathematical machinery of cooperative game theory can provide a theoretically grounded perspective for understanding the hidden structure of context. While the empirical sections of this paper operationalize these concepts using Deep Sets and neural approximations, this appendix serves as a theoretical sandbox. We aim to show how concepts like the "Core" and "Convex Games" can reframe the way the NLP community views context synergy, offering the formal intuition beneath the "Landscape of Context", i.e., how context looks like under certain value function. Before that, we first provide a more detailed comparison of different Shapley-based attribution frameworks in Table~\ref{tab:diff_shapley}, to clarify the unique position of SCP in the broader landscape of Shapley applications.

\begin{table}[h]
\centering
\caption{Comparison of Shapley-Based Attribution Frameworks}
\label{tab:diff_shapley}
\begin{tabular}{llccc}
\toprule
\cellcolor{yellow!15} & \cellcolor{yellow!15}\textbf{$v(S)$ type} & \cellcolor{yellow!15}\textbf{Granularity} & \cellcolor{yellow!15}\textbf{Scalable} & \cellcolor{yellow!15}\textbf{Learns} \\
\cellcolor{yellow!15} & \cellcolor{yellow!15} & \cellcolor{yellow!15}\textbf{("players")}  & \cellcolor{yellow!15}\textbf{($v(S)$)?} & \cellcolor{yellow!15}\textbf{$v(S)$?} \\
\midrule
Data Shapley(Original Work) \cite{ghorbani2019datashapley} & Model perf. & Data sample  & \redxmark & \redxmark \\
SHAP \cite{lundberg2017shap} & Surrogate model & Feature  & \greencmark & \redxmark \\
TokenSHAP \cite{goldshmidt2024tokenshapinterpretinglargelanguage} & Hard-coded (TF- & Token  & \greencmark & \redxmark \\
& IDF/cosine) & & & \\
\textbf{SCP (Ours)} & \textbf{Deep Sets} & \textbf{Sentence}  & \greencmark & \greencmark \\
\bottomrule
\end{tabular}
\end{table}

Table~\ref{tab:diff_shapley} summarizes the comparison of different Shapley-based attribution frameworks that answer completely different questions. Notably, SCP is the only trainable, scalable framework that directly attributes at the context unit level, making it uniquely suited for context pruning applications. The other frameworks fall into explainable AI (XAI) paradigms, where the Shapley value is used to attribute feature importance for a fixed model, rather than to rank discrete context units for pruning. SCP's direct optimization of a value function that maps subsets of context to performance metrics allows it to leverage the full theoretical properties of Shapley values in a practical, scalable manner.

It is worthy of notice that we have not designed any specific algorithm or training paradigm using the following concepts, since any validation of these algorithms and paradigms would require a separate paper. The following discussion is purely conceptual and theoretical, but we hope that these discussions can inspire future research in this direction.

\subsection{Formulating Context as a Cooperative Game}\label{sec:shap}

Instead of viewing a long prompt as a flat sequence of tokens, we model the context as a grand coalition of semantic components (e.g., sentences or paragraphs). The objective of the LLM reading this context can be abstracted as a value function.

\begin{definition}[Context Value Game] 
Let $N = \{c_1, c_2, \ldots, c_n\}$ be the set of context components. We define a fundamental value function $v: 2^N \rightarrow \mathbb{R}$ with $v(\emptyset) = 0$, where $v(S)$ represents the utility or information completeness (e.g., the likelihood of generating the correct answer) provided by a subset of components $S \subseteq N$. The pair $(N, v)$ constitutes a cooperative game.
\end{definition}

In natural language, sentences rarely contribute value independently. Information is riddled with logical prerequisites, anaphora, and multi-hop reasoning chains. Therefore, the marginal contribution of a component $c_i$, defined as $v(S \cup \{c_i\}) - v(S)$, is highly dependent on what information $S$ is already present. 

To assign a fair "importance score" to each component without bias toward any specific reading order or subset, we compute the expected marginal contribution over all possible information-accumulation sequences, which is precisely the Shapley value:

\begin{definition}[Shapley Importance Score]
The attribution score $\phi_i(v)$ for component $c_i$ in context $N$ under value function $v$ is the Shapley value:
\[
\phi_i(v) = \sum_{S \subseteq N \setminus \{c_i\}} \frac{|S|!(|N| - |S| - 1)!}{|N|!} [v(S \cup \{c_i\}) - v(S)]
\]
\end{definition}

The score obeys the four axiomatic properties:
\begin{itemize}
    \item Efficiency: The sum of the Shapley values of all players equals the total value of the grand coalition, i.e., \(\sum_{i \in N} \phi_i(v) = v(N)\).
    \item Symmetry: If two players \(i\) and \(j\) contribute equally to all coalitions, then they receive the same Shapley value, i.e., if \(v(S \cup \{i\}) = v(S \cup \{j\})\) for all \(S \subseteq N \setminus \{i, j\}\), then \(\phi_i(v) = \phi_j(v)\).
    \item Dummy Player: If a player \(i\) does not contribute to any coalition (i.e., \(v(S \cup \{i\}) = v(S)\) for all \(S \subseteq N \setminus \{i\}\)), then their Shapley value is zero, i.e., \(\phi_i(v) = 0\).
    \item Additivity: For two games with characteristic functions \(v\) and \(w \), the Shapley value of the combined game \(v + w\) is the sum of the Shapley values of the individual games, i.e., \(\phi_i(v + w) = \phi_i(v) + \phi_i(w)\) for all \(i \in N\).
\end{itemize}

These four axioms are not merely mathematical conveniences; they correspond to minimal desiderata that any reasonable context-reranking mechanism should satisfy:
\begin{itemize}
    \item \textbf{Efficiency} guarantees \textit{information completeness}: the sum of all sentence scores equals the total value of the full context, so no utility is left unexplained or double-counted.
    \item \textbf{Symmetry} enforces \textit{functional equivalence}: if two sentences provide identical marginal contributions across all subsets, the reranker must treat them identically---this naturally handles semantically duplicated retrievals.
    \item \textbf{Dummy Player} ensures \textit{noise suppression}: any sentence that never changes the answer quality (a pure distractor) receives zero score and is pruned first.
    \item \textbf{Additivity} enables \textit{multi-objective decomposition}: if the total value is the sum of independent sub-tasks (e.g., correctness + faithfulness), the attribution of each sub-task can be computed separately and then linearly combined without retraining the value function.
\end{itemize}

Because these four properties are natural requirements for a fair and interpretable reranking system, the following classical result acquires a contextual justification: the Shapley value is the \emph{only} attribution mechanism compatible with all of them.

\begin{theorem}[Axiomatic Characterisation of Context Attribution \cite{shapley1953value}]
Let $\mathcal{G}$ be the set of all cooperative games on a fixed player set $N$. The Shapley value $\phi: \mathcal{G} \to \mathbb{R}^{n}$ is the unique attribution mechanism that simultaneously satisfies Efficiency, Symmetry, Dummy Player, and Additivity.
\end{theorem}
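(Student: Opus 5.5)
The plan is the classical two-part argument: first show that the Shapley formula satisfies the four axioms, then show that any mechanism satisfying them must coincide with it.

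\textbf{Existence.} We verify that the formula in the Shapley Importance Score definition satisfies the four axioms on $\mathcal{G}$.

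Additivity is immediate, since $\phi_i(v)$ is linear in $v$. Dummy Player is also immediate: every marginal difference $v(S\cup\{i\})-v(S)$ vanishes.

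For Efficiency and Symmetry we rewrite the formula in its equivalent permutation form,
\[
\phi_i(v)=\frac{1}{n!}\sum_{\pi}\bigl[v(P_i^\pi\cup\{i\})-v(P_i^\pi)\bigr],
\]
where $P_i^\pi$ is the set of predecessors of $i$ in the ordering $\pi$. The weight $|S|!(n-|S|-1)!/n!$ is exactly the fraction of orderings in which the predecessors of $i$ form $S$.

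In this form, Efficiency follows by summing over $i$ for each fixed $\pi$: the sum telescopes to $v(N)-v(\emptyset)=v(N)$, using $v(\emptyset)=0$ from the Context Value Game definition. Symmetry follows from the bijection $\pi\mapsto\tau\circ\pi$, where $\tau$ is the transposition of $i$ and $j$. Under the symmetry hypothesis, this bijection maps the marginal contributions of $i$ onto those of $j$.

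\textbf{Uniqueness.} Let $\psi$ be any mechanism satisfying the four axioms. We work in the vector space of games with $v(\emptyset)=0$, which has dimension $2^n-1$, and use the unanimity basis
\[
u_T(S)=\mathbf{1}[T\subseteq S], \qquad \emptyset\neq T\subseteq N.
\]

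\emph{Step 1: $\psi$ is determined on scaled unanimity games.} Fix $c\in\mathbb{R}$ and consider $c\,u_T$.
\begin{itemize}
\item Every $i\notin T$ is a dummy player, so $\psi_i(c\,u_T)=0$.
\item Any two players $i,j\in T$ are symmetric, so they receive equal values.
\item Efficiency then forces $\psi_i(c\,u_T)=c/|T|$ for every $i\in T$.
\end{itemize}
The same computation gives the same values for $\phi$.

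\emph{Step 2: the unanimity games form a basis.} Every game can be written as $v=\sum_T \lambda_T u_T$, with Harsanyi dividends given by M\"obius inversion:
\[
\lambda_T=\sum_{R\subseteq T}(-1)^{|T|-|R|}v(R).
\]
Linear independence follows from a triangularity argument: evaluate at a minimal $T$ carrying a nonzero coefficient. This is the step I would check most carefully.

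\emph{Step 3: extend by additivity.} Additivity gives $\psi(\sum_T w_T)=\sum_T\psi(w_T)$ for finite sums of games. Applying this to the decomposition $v=\sum_T(\lambda_T u_T)$, each summand $\lambda_T u_T$ is handled directly by Step 1. Note that Step 1 already covers negative scalars $c$, so no homogeneity assumption is needed. It follows that $\psi(v)=\phi(v)$ for every $v\in\mathcal{G}$.

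\textbf{Main obstacle.} The expected difficulty is the bookkeeping of the domain. The axioms and the basis argument require $\mathcal{G}$ to be the full space of games with $v(\emptyset)=0$. This is exactly why Step 1 handles arbitrary scalars $c$ directly rather than deriving scaling from additivity, which would only give rational homogeneity. Everything else is routine.
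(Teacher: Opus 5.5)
Your proposal is correct and is the standard argument. Existence comes from the permutation form. Uniqueness comes from pinning the mechanism down on scaled unanimity games $c\,u_T$ and then extending through the dividend decomposition and additivity. The paper supplies no proof of its own and simply defers to \cite{shapley1953value}, whose proof runs along these same lines. Your explicit use of $v(\emptyset)=0$ and of arbitrary real $c$ correctly handles the only domain subtleties. As a minor simplification, uniqueness only needs the $u_T$ to span the space of games, which the M\"obius formula already gives, so the linear-independence check you flagged is not actually required.
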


\subsection{Synergy, Stability, and the Core}

If we simply truncate the context by selecting the top-$K$ sentences with the highest Shapley values, how do we know the resulting subset is "stable"? In cooperative game theory, stability means that no unselected group of players can form a coalition that yields a higher value than their current assigned payoff. This concept is encapsulated by the \textit{Core}.

\begin{definition}[The Contextual Core] 
Given a context game $(N, v)$, the Core $C(v)$ is the set of all payoff allocations $x \in \mathbb{R}^N$ that efficiently distribute the total context value ($ \sum_{i \in N} x_i = v(N) $) while ensuring no subset is "undervalued" relative to its intrinsic synergy:
\[
C(v) = \left\{x\in \mathbb{R}^N \;|\; \sum_{i \in S} x_i \geq v(S) \quad \text{for all } S \subseteq N\right\}
\]
\end{definition}

In applied context engineering, the core ensures that if a subset of sentences $S$ contains a highly synergistic logical chain (e.g., a multi-hop bridge), the individual scores $x_i$ assigned to those sentences must sum to at least $v(S)$. If the core is empty, it means the context contains conflicting synergies that cannot be linearly decomposed.

We call a subset $S$ \textbf{tight} at a core allocation $x$ when its scores sum to \emph{exactly} its value, i.e., $\sum_{i \in S} x_i = v(S)$. In context-pruning terms, this means the importance scores of the sentences in $S$ add up to precisely the worth of $S$ as a whole---no surplus, no deficit. A tight set is therefore a ``natural information block'': its internal attribution is fully explained by its member scores, suggesting a coherent, self-contained unit of meaning. We will see that these tight sets form the nodes of the context hierarchy.

To accommodate the noisy nature of LLM generation, we can relax this to the \textit{$\epsilon$-Core}:
\begin{definition}[$\epsilon$-Core]
\[
C_\epsilon(v) = \left\{x\in \mathbb{R}^N \;|\; \sum_{i \in N} x_i = v(N), \sum_{i \in S} x_i \geq v(S) - \epsilon \quad \text{for all } S \subseteq N\right\}
\]
\end{definition}
A non-empty $\epsilon$-core suggests that the attribution mechanism can be stable under bounded noise, providing theoretical motivation for using convex approximations in our neural network design.

\subsection{Convex Games and the Context Hierarchy}

To deeply understand the structural topology of information, we study a specific class of games where the value function exhibits supermodularity (often termed "convex games" in cooperative game theory). For non-convex scenario, we provide a possible idea, see paragraph~\ref{para:solving_non_convex}.

\begin{definition}[Convex Context Game (Supermodularity)]
A context game $(N, v)$ is convex if its value function $v$ is supermodular: for any two subsets $S, T \subseteq N$,
\[
v(S) + v(T) \leq v(S \cup T) + v(S \cap T)
\]
\end{definition}
In a convex game, a "snowball effect" can occur: the marginal utility of a sentence is non-decreasing as the surrounding context grows. This provides a natural abstraction for multi-hop reasoning, where a bridge entity may be uninformative alone but valuable when joined with its paired premise.

A foundational property of a convex game is that its Core $C(v)$ is large and non-empty. The extreme points of this Core induce structured families of subsets called \textit{Laminar Families}.

\begin{definition}[Laminar Family]
A family of sets $\mathcal{F}$ is called laminar if for any two sets $A, B \in \mathcal{F}$, either $A \subseteq B$, $B \subseteq A$, or $A \cap B = \emptyset$.
\end{definition}

\begin{figure}
\centering
\includegraphics[width=1\textwidth]{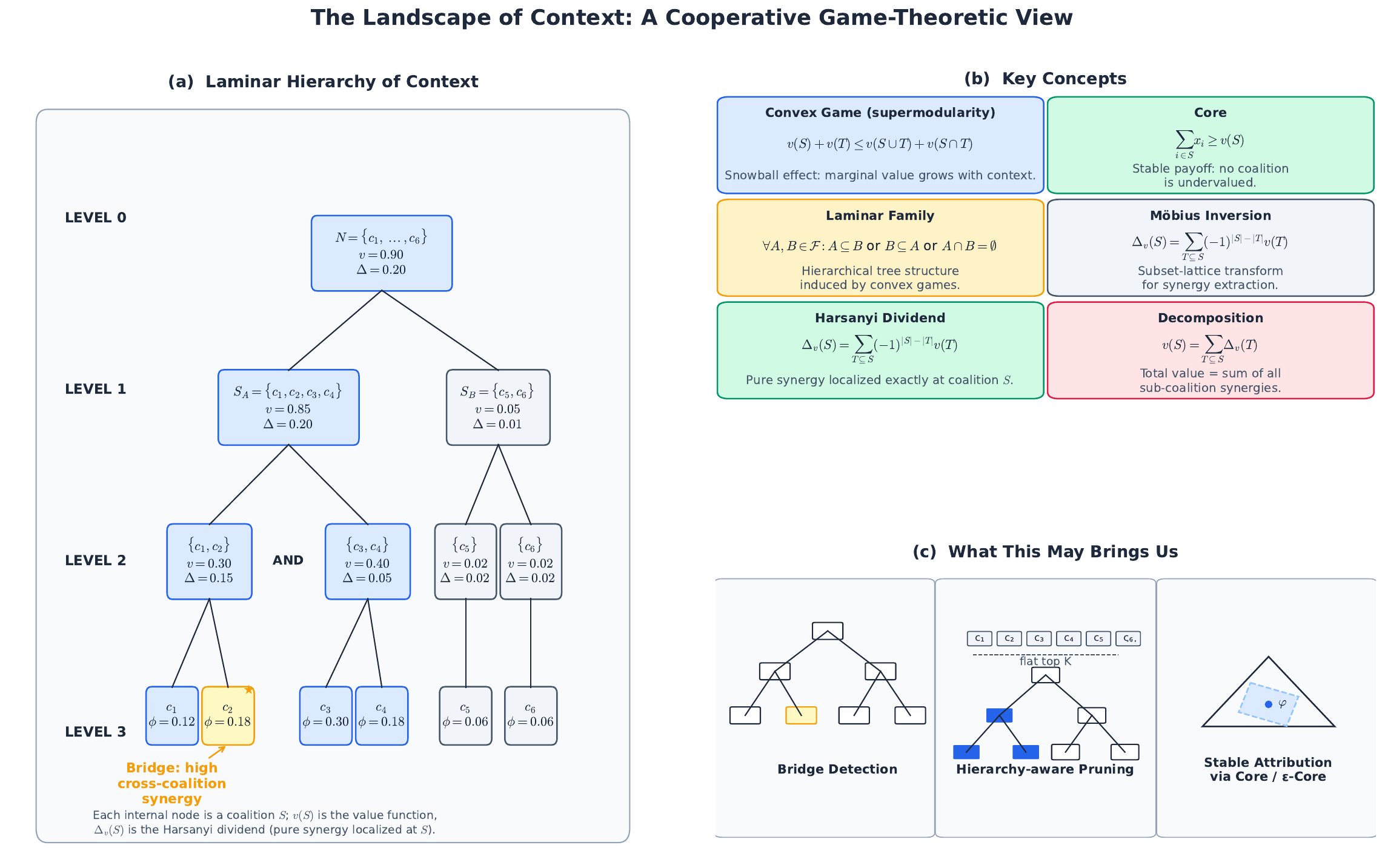}
\caption{Landscape of Context: What Will This Look Like? In this figure, we show an example of a context hierarchy induced by a convex value function. The specific value function and the value in the figure is not important, since they will vary across different algorithms and scenarios. The key point is that the tree structure itself is a direct consequence of the convexity of the value function, which ensures that the tight sets form a laminar family.}
\label{fig:landscape_of_context}
\end{figure}

Every extreme point $x$ in the core of a convex game defines a specific nested sequence of tight subsets $T_v(x) = \{S \subseteq N \mid \sum_{i \in S} x_i = v(S)\}$. Because this set system is laminar, each parent set either remains a leaf or splits into mutually disjoint child subsets. Concretely, a set $S$ splits whenever there exist disjoint proper subsets $T_1, T_2 \subsetneq S$ that are also tight and together cover $S$, meaning $v(S)$ decomposes additively across those children with no residual cross-subset synergy. If no such decomposition exists, $S$ remains a leaf (typically an individual sentence or an irreducible cohesive group). This yields a hierarchical tree: the root is the entire context, intermediate nodes branch into non-overlapping topical paragraphs or logical chains, and the leaves are individual sentences (Figure~\ref{fig:landscape_of_context}).

\paragraph{From Context to Tree: A Concrete Procedure.} To see how a concrete context unfolds into a tree, fix a core allocation $x$ (e.g., the Shapley value). The construction proceeds in three steps. \textbf{(1)~Identify tight sets.} For every subset $S \subseteq N$, check whether $\sum_{i \in S} x_i = v(S)$. The collection $\mathcal{T}(x)$ of all such sets always contains $N$ (by efficiency) and all singletons whose standalone worth equals their allocated score. \textbf{(2)~Uncross.} If two tight sets $A, B \in \mathcal{T}(x)$ cross (i.e., $A \cap B \neq \emptyset$ but neither contains the other), replace them by $A \cup B$ and $A \cap B$, which are also tight by convexity. Since $A \cap B \subseteq A \cup B$, this pair is laminar. Repeating this yields a maximal laminar family $\mathcal{L} \subseteq \mathcal{T}(x)$. \textbf{(3)~Build the tree.} For each $S \in \mathcal{L}$, its children are the maximal proper subsets of $S$ that also belong to $\mathcal{L}$. These children are pairwise disjoint and cover $S$ whenever $S$ is not a leaf. The root is $N$; leaves are sets with no children. This tree reveals the latent hierarchical structure of the context: each node is a self-contained information block, and each split marks a boundary where the internal synergy is fully explained by the children.

If there are multiple tight sets that can split a parent, the resulting tree is not unique. This non-uniqueness reflects future design for hieararchy construction algorithms, which may select splits based on additional criteria (e.g., greedy strategies). 

Of course, not every node in this tree can be split further: a leaf has no tight proper subsets, meaning its internal synergy cannot be decomposed into smaller self-contained blocks. This irreducible synergy is not attributable to any single sentence, but arises from the \emph{cooperation} among the members of that leaf. To quantify exactly how much value is created by this cooperation itself, beyond what the sub-components already contribute: we apply M\"{o}bius inversion, which yields the \textit{Harsanyi Dividends}.

\begin{definition}[Harsanyi Dividends on Subsets \cite{Harsanyi1982}]
Given a convex context game $(N, v)$, the M\"{o}bius inversion of the value function yields the Harsanyi dividend $\Delta_v(S)$ for a coalition $S$:
\[
\Delta_v(S) = \sum_{T \subseteq S} (-1)^{|S|-|T|} v(T)
\]
\end{definition}

The Harsanyi dividend captures precisely the value generated \textit{only} when all members of $S$ are present simultaneously. Through this, any context value function can be orthogonally decomposed:
\[
v(S) = \sum_{T \subseteq S} \Delta_v(T)
\]

\paragraph{Rationale} In the theoretical analysis, we are interested in how much value is created by the cooperation itself, beyond the value derived from sub-coalitions. Harsanyi dividends provide a tool that may be applied to identify essential ``bridge'' sentences rather than only supporting facts. 

We highlight the theoretical discussion around laminar families as a foundation for future context hierarchy design and context management systems. Future work may develop clever decomposition strategies and efficient tight set identification methods to accelerate the construction of the landscape. See Section~\ref{sec:more_future} for more discussion on future work in this direction.

\subsection{Landscape of Context}\label{sec:landscape}

Building on the preceding concepts, the ``Landscape of Context'' may be defined as a forest of game trees, each induced by a convex value function. We can further explore how to design neural architectures that can effectively capture this structure while remaining computationally tractable. This motivates our proposal of Convex Deep Sets in the next section.

\begin{definition}[Landscape of Context]
The Landscape of Context is a forest induced by a family of convex functions $\{v_i\}_{i = 1}^n$, where each tree corresponds to a convex value function defined over the same set of context components. Each node in the tree represents a coalition of sentences, and the edges represent the hierarchical relationships among these coalitions.
\end{definition}

We note that each value function $v_i$ does not determine a unique tree on its own: each extreme point of the core $C(v_i)$ induces a distinct laminar chain, with up to $n!$ such chains in total. Selecting a specific tree therefore requires an auxiliary rule---for example, a prior permutation, or a partition that maximizes Harsanyi dividends. We leave the design of such selection rules to future work.

\subsection{Convex Value Functions}\label{sec:convexity}

While standard Deepsets act as universal approximators, the resulting baseline value function typically lacks explicit analytical structure. Building upon the formalized framework in Appendix A, we emphasize that certain mathematical properties, particularly convexity and supermodularity, are useful when 
mapping a combinatorial hierarchy among contextual elements. To merge deep learning representations with this structural prior, we propose a specialized subset configuration termed \textit{Convex Deep Sets} utilizing convex neural networks.

\begin{definition}[Convex Deep Sets]
We define the Convex Deep Sets architecture as:
\[
f(X) = \rho_{convex}\left(\sum_{x\in X} \psi_{convex}(x)\right)
\]
where, in the context of this paper, the functionally constrained value distribution becomes:
\[
v(S;\theta) = \rho_{convex}\left(\sum_{c_i\in S} \psi_{convex}(c_i)\right)
\]
\end{definition}

By architecturally instilling this constraint, we introduce a convexity-based structural prior that is conceptually connected to supermodular (convex) games (discussed in Appendix A). While imposing convexity may restrict the hypothesis space, it can serve as an inductive bias for exploring hierarchical information aggregation in the proposed ``Landscape of Context.'' See Section~\ref{sec:conv-const} for implementation experiments.

\subsection{The error bound for Monte-Carlo Sampling} 

\begin{figure}
\centering
\includegraphics[width=1\textwidth]{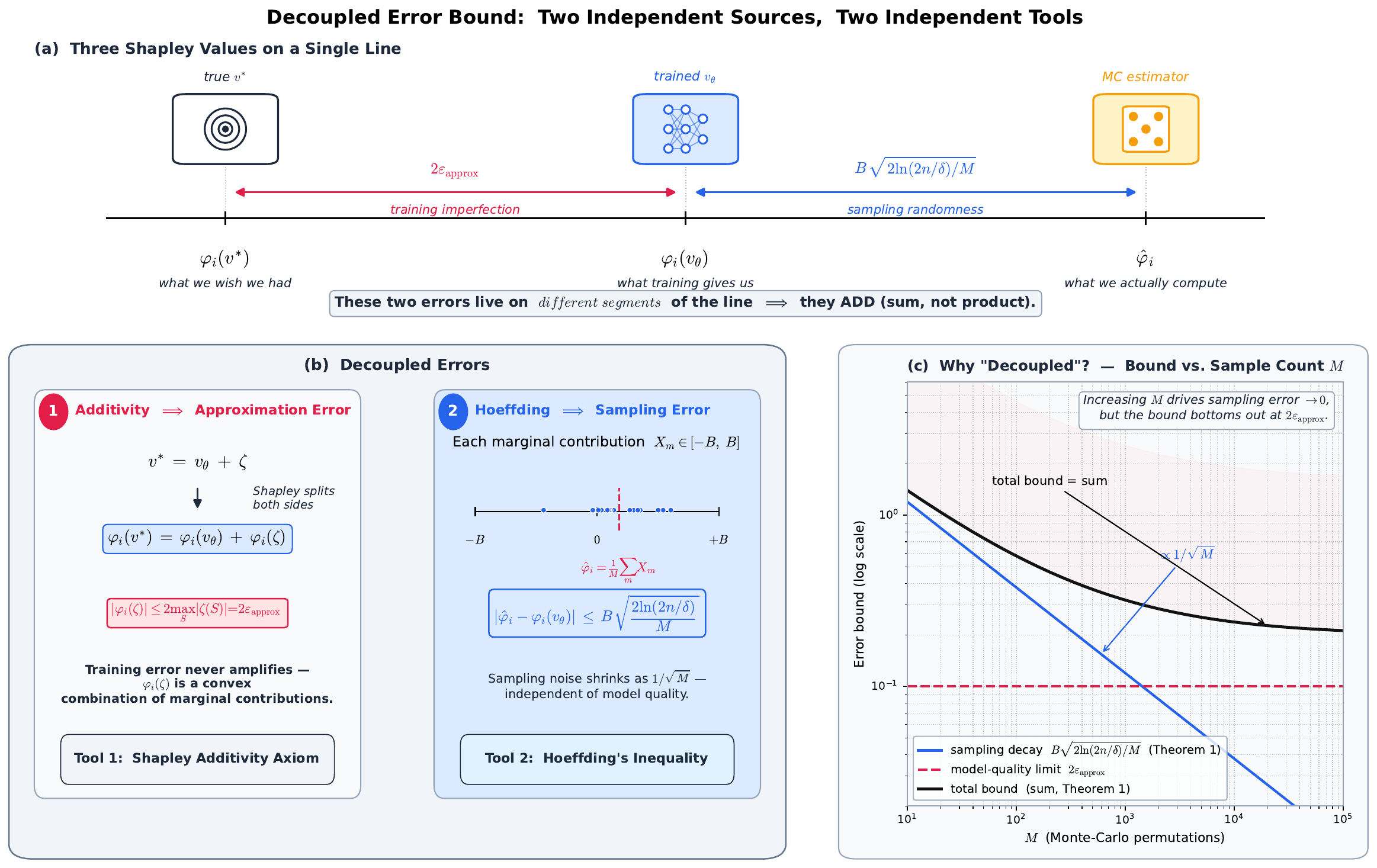}
\caption{The error bound for pruning attribution can be decoupled into two independent sources: the approximation error of the trained value function and the sampling error from Monte-Carlo estimation.}
\label{fig:error_bound}
\end{figure}

In our practical implementation, we use Monte-Carlo sampling to estimate the Shapley values. A natural question is: how does the estimation error of the Shapley values affect the downstream performance after pruning? We now define the trained value function with error and the corresponding Shapley value estimation, and then we can derive a theoretical error bound for the pruning performance based on the estimation error.

\begin{theorem}[Error Bound for Pruning Attribution (Recall)]
\label{thm:error-bound-appendix}
Let $v^*$ be the true value function and $v_{\theta}$ a trained approximation satisfying Assumption~\ref{ass:bounded}. Assume the point-wise approximation error is uniformly bounded by $\epsilon_{\mathrm{approx}}$, i.e.,
\[
|v^*(S) - v_{\theta}(S)| \leq \epsilon_{\mathrm{approx}}, \quad \forall S \subseteq N.
\]
Let $\hat{\phi}_i$ be the Monte-Carlo Shapley estimate of player $i$ obtained from $M$ independent uniform permutations (Algorithm~1). Then, with probability at least $1-\delta$ over the random permutations, the following holds simultaneously for all $i \in N$:
\[
\bigl|\hat{\phi}_i(v_{\theta}) - \phi_i(v^*)\bigr| \;\leq\; B\sqrt{\frac{2\ln(2n/\delta)}{M}} \;+\; 2\epsilon_{\mathrm{approx}}.
\]
\end{theorem}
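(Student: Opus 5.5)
The plan is to split the error by the triangle inequality into a pure Monte-Carlo sampling term and a pure model-approximation term:
\[
\bigl|\hat{\phi}_i(v_{\theta}) - \phi_i(v^*)\bigr| \;\leq\; \bigl|\hat{\phi}_i(v_{\theta}) - \phi_i(v_{\theta})\bigr| \;+\; \bigl|\phi_i(v_{\theta}) - \phi_i(v^*)\bigr| .
\]
I will bound the first term with high probability by a concentration inequality. I will bound the second term deterministically using linearity of the Shapley value (the Additivity axiom stated above). This additive split is exactly the "decoupling" that the statement advertises.

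\textbf{Approximation term.} Write $\phi_i(v_\theta) - \phi_i(v^*) = \phi_i(v_\theta - v^*)$ by Additivity. Expanding the definition gives
\[
\sum_{S\subseteq N\setminus\{i\}} w_S \bigl[(v_\theta - v^*)(S\cup\{i\}) - (v_\theta - v^*)(S)\bigr],
\qquad
w_S = \frac{|S|!\,(n-|S|-1)!}{n!}.
\]
Each bracket is at most $2\epsilon_{\mathrm{approx}}$ in absolute value by the uniform pointwise bound. The weights $w_S$ are nonnegative and sum to $1$, since they are the probabilities that $S$ is the predecessor set of $i$ under a uniform permutation. Hence this term is at most $2\epsilon_{\mathrm{approx}}$.

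\textbf{Sampling term.} For a fixed player $i$, Algorithm~1 produces $\hat{\phi}_i(v_\theta) = \frac{1}{M}\sum_{m=1}^M X_m$. Here $X_m = v_\theta(P_m^i\cup\{i\}) - v_\theta(P_m^i)$ and $P_m^i$ is the set of predecessors of $i$ in the $m$-th permutation. The $X_m$ are i.i.d. because the permutations are independent. Each $X_m$ is unbiased for $\phi_i(v_\theta)$, since the predecessor set of $i$ under a uniform permutation equals $S$ with probability exactly $w_S$. By Assumption~\ref{ass:bounded}, $X_m\in[-B,B]$, an interval of width $2B$. Hoeffding's inequality then gives
\[
\Pr\bigl(|\hat{\phi}_i - \phi_i(v_\theta)|\geq t\bigr)\leq 2\exp\!\bigl(-2Mt^2/(2B)^2\bigr) = 2\exp\!\bigl(-Mt^2/(2B^2)\bigr).
\]
Setting the right-hand side to $\delta/n$ yields $t = B\sqrt{2\ln(2n/\delta)/M}$. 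A union bound over the $n$ players makes the bound hold simultaneously for all $i$ with probability at least $1-\delta$. Within a single permutation the marginals of different players are dependent, but this does not matter: the union bound needs only the per-player marginal distributions.

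\textbf{Main obstacle.} The step needing the most care is getting the constant right in the sampling term. One must use the range $[-B,B]$ of the marginal contributions, which has width $2B$, rather than the range $[0,B]$ of $v_\theta$ itself. That width is precisely what produces the factor $\sqrt{2}$ inside the square root. One must also verify unbiasedness by matching the permutation-predecessor distribution to the Shapley weights $w_S$. Once these two points are checked, combining the two bounds completes the argument.
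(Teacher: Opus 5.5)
Your proposal is correct and follows essentially the same route as the paper. You split the error with the triangle inequality, use Additivity and the convex-combination weights to bound the approximation term by $2\epsilon_{\mathrm{approx}}$, apply Hoeffding to marginal contributions in $[-B,B]$ with $t = B\sqrt{2\ln(2n/\delta)/M}$, and finish with a union bound over the $n$ players. You also explicitly justify the unbiasedness of each $X_m$ by matching the predecessor-set distribution to the Shapley weights, which the paper only asserts.
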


\begin{proof}
We decompose the total error into two independent sources.

Define the residual $\zeta(S) = v^*(S) - v_{\theta}(S)$. By the additivity axiom of the Shapley value,
\[
\phi_i(v^*) = \phi_i(v_{\theta} + \zeta) = \phi_i(v_{\theta}) + \phi_i(\zeta).
\]
By the definition of the Shapley value, $\phi_i(\zeta)$ is a convex combination of the marginal contributions $\{\zeta(S\cup\{i\})-\zeta(S)\}_{S\subseteq N\setminus\{i\}}$. The uniform bound $|\zeta(S)| \leq \epsilon_{\mathrm{approx}}$ implies $|\zeta(S\cup\{i\})-\zeta(S)|\leq 2\epsilon_{\mathrm{approx}}$ for every $S$, and therefore
\[
\bigl|\phi_i(v_{\theta}) - \phi_i(v^*)\bigr| = \bigl|\phi_i(\zeta)\bigr| \leq 2\epsilon_{\mathrm{approx}}.
\]

Fix a player $i$. For each sampled permutation $\pi^{(m)}$ ($m=1,\dots,M$), let $S^{(m)}_{\pi,j}$ denote the coalition of elements preceding $i$ in $\pi^{(m)}$. The marginal contribution of $i$ along this permutation is
\[
X_m = v_{\theta}\bigl(S^{(m)}_{\pi,j} \cup \{i\}\bigr) - v_{\theta}\bigl(S^{(m)}_{\pi,j}\bigr).
\]
Because $v_{\theta}$ takes values in $[0,B]$ (Assumption~\ref{ass:bounded}), each $X_m$ lies in $[-B, B]$. The variables $\{X_m\}_{m=1}^{M}$ are i.i.d. with $\mathbb{E}[X_m] = \phi_i(v_{\theta})$. Consequently, the Monte-Carlo estimator $\hat{\phi}_i = \frac{1}{M}\sum_{m=1}^{M} X_m$ satisfies, by Hoeffding's inequality,
\[
\mathbb{P}\Bigl(\bigl|\hat{\phi}_i - \phi_i(v_{\theta})\bigr| \geq t\Bigr) \;\leq\; 2\exp\!\left(-\frac{M t^{2}}{2B^{2}}\right), \quad \forall t > 0.
\]
Choosing $t = B\sqrt{2\ln(2n/\delta)/M}$ yields
\[
\mathbb{P}\Bigl(\bigl|\hat{\phi}_i - \phi_i(v_{\theta})\bigr| \geq B\sqrt{\frac{2\ln(2n/\delta)}{M}}\Bigr) \;\leq\; \frac{\delta}{n}.
\]
Applying a union bound over all $n$ players gives the simultaneous high-probability guarantee.

Combining the two bounds,
\[
\bigl|\hat{\phi}_i - \phi_i(v^*)\bigr|
\;\leq\;
\underbrace{\bigl|\hat{\phi}_i - \phi_i(v_{\theta})\bigr|}_{\text{sampling error}}
\;+\;
\underbrace{\bigl|\phi_i(v_{\theta}) - \phi_i(v^*)\bigr|}_{\text{approximation error}}
\;\leq\;
B\sqrt{\frac{2\ln(2n/\delta)}{M}} + 2\epsilon_{\mathrm{approx}}.
\]
\end{proof}

\begin{theorem}[Sample Complexity for Top-K Ranking Preservation]
\label{thm:ranking-preservation}
Under the assumptions of Theorem~\ref{thm:error-bound-appendix}, let $\Delta_{(K)} = \phi_{(K)}(v^*) - \phi_{(K+1)}(v^*)$ denote the gap between the $K$-th and $(K+1)$-th true Shapley values. If $\Delta_{(K)} > 4\epsilon_{\mathrm{approx}}$, then with
\[
M \;\geq\; \frac{8B^{2}\,\ln(2n/\delta)}{\bigl(\Delta_{(K)} - 4\epsilon_{\mathrm{approx}}\bigr)^{2}}
\]
independent permutations, the estimated Top-$K$ set $\hat{S}_K = \{i : \hat{\phi}_i \geq \hat{\phi}_{(K)}\}$ equals the true Top-$K$ set $S_K^* = \{i : \phi_i(v^*) \geq \phi_{(K)}(v^*)\}$ with probability at least $1-\delta$.
\end{theorem}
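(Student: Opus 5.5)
The plan is to reduce the Top-$K$ claim to the uniform deviation bound of Theorem~\ref{thm:error-bound-appendix}, applied with a sampling budget small enough that every estimate lies within $\Delta_{(K)}/2$ of its true value. Write
\[
\eta(M) = B\sqrt{\frac{2\ln(2n/\delta)}{M}} + 2\epsilon_{\mathrm{approx}}.
\]
Theorem~\ref{thm:error-bound-appendix} gives an event $\mathcal{E}$ of probability at least $1-\delta$ over the permutations. On $\mathcal{E}$ we have $|\hat{\phi}_i(v_\theta) - \phi_i(v^*)| \le \eta(M)$ for all $i \in N$ simultaneously. The whole argument will take place on $\mathcal{E}$, so no further union bound is needed.

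The first step is to show that the stated lower bound on $M$ forces $\eta(M) < \Delta_{(K)}/2$, or at worst $\le$. The requirement $\eta(M)\le \Delta_{(K)}/2$ is equivalent to
\[
B\sqrt{2\ln(2n/\delta)/M} \le \tfrac12\bigl(\Delta_{(K)} - 4\epsilon_{\mathrm{approx}}\bigr).
\]
The hypothesis $\Delta_{(K)} > 4\epsilon_{\mathrm{approx}}$ makes the right-hand side positive, and squaring gives exactly
\[
M \ge \frac{8B^2\ln(2n/\delta)}{\bigl(\Delta_{(K)}-4\epsilon_{\mathrm{approx}}\bigr)^2}.
\]
So the hypothesis on $M$ is precisely the condition $\eta(M)\le\Delta_{(K)}/2$.

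The second step is the separation argument. Take any $i \in S_K^*$ and any $j \notin S_K^*$. Then $\phi_i(v^*) \ge \phi_{(K)}(v^*)$ and $\phi_j(v^*) \le \phi_{(K+1)}(v^*)$, so $\phi_i(v^*) - \phi_j(v^*) \ge \Delta_{(K)}$. On $\mathcal{E}$ this gives
\[
\hat{\phi}_i - \hat{\phi}_j \ge \Delta_{(K)} - 2\eta(M) \ge 0.
\]
Every true top-$K$ element is therefore estimated at least as high as every element outside the true top-$K$. If $|S_K^*| = K$, which holds because $\Delta_{(K)}>0$ means there is no tie across the $K$/$(K+1)$ boundary, then the $K$ largest estimates are exactly the elements of $S_K^*$. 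Hence $\hat S_K = \{i : \hat\phi_i \ge \hat\phi_{(K)}\} = S_K^*$.

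The main obstacle is the boundary case $\eta(M) = \Delta_{(K)}/2$ exactly, where the separation is only weak and a tie $\hat\phi_i = \hat\phi_j$ could make $\hat S_K$ contain more than $K$ elements. I would handle it in one of two ways. The first option is to note that the Hoeffding bound in Theorem~\ref{thm:error-bound-appendix} is stated with ``$\ge t$'' in the failure event, so on $\mathcal{E}$ the deviation is strictly less than the sampling term. This gives $|\hat\phi_i - \phi_i(v^*)| < \eta(M)$ strictly, and hence strict separation. The second, fallback option is to adopt a fixed tie-breaking convention in $\hat S_K$. In the proof I will take the strict-inequality route, re-reading the complement of the Hoeffding event, since it needs no change to the statement.
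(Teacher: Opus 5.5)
Your proposal is correct and follows the same route as the paper: apply the simultaneous deviation bound of Theorem~\ref{thm:error-bound-appendix}, turn the condition on $M$ into $\eta \le \Delta_{(K)}/2$, and separate every true top-$K$ element from every element outside it. Your extra care is warranted. The paper derives only the strict condition $\eta < \Delta_{(K)}/2$, which tacitly needs $M$ strictly above the stated bound, so your use of the strict complement of the Hoeffding failure event, together with your check that $|S_K^*| = K$, closes a small boundary gap the paper leaves open.
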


\begin{proof}
By Theorem~\ref{thm:error-bound-appendix}, with probability at least $1-\delta$ we have $|\hat{\phi}_i - \phi_i(v^*)| \leq \eta$ simultaneously for all $i$, where $\eta = B\sqrt{2\ln(2n/\delta)/M} + 2\epsilon_{\mathrm{approx}}$. For any $i \in S_K^*$ and $j \notin S_K^*$, we have $\phi_i(v^*) \geq \phi_{(K)}(v^*) > \phi_j(v^*) + \Delta_{(K)}$. If $\eta < \Delta_{(K)}/2$, then
\[
\hat{\phi}_i \;\geq\; \phi_i(v^*) - \eta \;>\; \phi_{(K)}(v^*) - \frac{\Delta_{(K)}}{2} \;=\; \phi_{(K+1)}(v^*) + \frac{\Delta_{(K)}}{2} \;>\; \phi_j(v^*) + \eta \;\geq\; \hat{\phi}_j,
\]
so the ordering between $S_K^*$ and its complement is preserved. The condition $\eta < \Delta_{(K)}/2$ is equivalent to $B\sqrt{2\ln(2n/\delta)/M} < \Delta_{(K)}/2 - 2\epsilon_{\mathrm{approx}}$, which yields the stated bound on $M$ after rearrangement (requiring $\Delta_{(K)} > 4\epsilon_{\mathrm{approx}}$).
\end{proof}

\section{Alternative Pruning Strategies}

\subsection{Top-K Pruning} 

We provide the pruning algorithm in pseudo-code format in Algorithm~\ref{alg:shapley_pruning_topk}. The algorithm is simple but effective in terms of coarse-grained pruning. 

\begin{algorithm}
\captionsetup{labelformat=empty}
\caption{\colorbox{yellow!15}{\parbox{\dimexpr\linewidth-2\fboxsep\relax}{\textbf{Algorithm~\thealgorithm} Shapley Pruning Algorithm (Top-K Version)}}}
\label{alg:shapley_pruning_topk}
\begin{algorithmic}[1]
\Require A context set \(C = \{c_1, c_2, \ldots, c_n\}\), optimized value function \(v_\theta\), budget \(K\)
\Ensure A pruned context subset \(C' \subseteq C\)
\State Initialize an empty subset \(C' = \emptyset\)
\State Compute the Shapley attribution \(\phi_i(v_{\theta})\) for each \(c_i \in C\) with Monte Carlo sampling
\State Identify the index set \(I_{TopK}\) corresponding to the \(K\) highest \(\phi_i\) scores
\For{each part \(c_i \in C\)}
    \If{\(i \in I_{TopK}\)}
        \State \(C' \gets C' \cup \{c_i\}\)
    \EndIf
\EndFor
\State \textbf{return} \(C'\)
\end{algorithmic}
\end{algorithm}

\subsection{Average Pruning (Tried, but not as effective as Top-K)}

In an early version of this work, we tried a straightforward pruning strategy based on the average Shapley value as a threshold. The algorithm is shown in Algorithm~\ref{alg:shapley_pruning_average}. We found that this simple strategy could not meet stable compression requirements, so we switched to the Top-K version used in the main paper. We still include this version here as a reference for future research on more advanced pruning strategies. In future work, adaptive thresholding strategies may be applied to achieve better compression rates across diverse scenarios, but this is far beyond the scope of this paper, and we leave it as an open question for future research.

\begin{algorithm}
\captionsetup{labelformat=empty}
\caption{\colorbox{yellow!15}{\parbox{\dimexpr\linewidth-2\fboxsep\relax}{\textbf{Algorithm~\thealgorithm} Shapley Pruning Algorithm (Average-Threshold Version)}}}
\label{alg:shapley_pruning_average}
\begin{algorithmic}[1]
\Require A context set \(C = \{c_1, c_2, \ldots, c_n\}\), optimized value function \(v_\theta\)
\Ensure A pruned context subset \(C' \subseteq C\)
\State Initialize an empty subset \(C' = \emptyset\)
\State Compute the Shapley attribution \(\phi_i(v_{\theta})\) for each \(c_i \in C\)
\State Calculate threshold $\bar{\phi} \gets \max(0, \frac{1}{n} \sum_{i=1}^{n} \phi_i)$
\For{each part \(c_i \in C\)}
    \If{\(\phi_i \geq \bar{\phi}\)}
        \State \(C' \gets C' \cup \{c_i\}\)
    \EndIf
\EndFor
\State \textbf{return} \(C'\)
\end{algorithmic}
\end{algorithm}

\section{Experiment Details} 

This appendix provides comprehensive experimental data that complements the main results presented in Section 4. We include detailed performance metrics across all datasets, ablation studies, and cross-domain generalization analysis. "sentlevel" means that SCP is trained on sentence-level annotations, while "paragraph" means that SCP is trained on paragraph-level annotations. "split" means that the model is able to prune at the sentence level, while "no split" means that the model can only prune at the paragraph level.

\subsection{Experiment Setup Details}

During inference, we apply our parallel Monte-Carlo estimator batched on an Nvidia A800 GPU. We use the SCP model trained on MS MARCO by default for inference; the use of models trained on other datasets is specified before each test case. Inference time for each model is measured as the wall-clock cost of the pruning process, including the forward passes of the value function and Monte-Carlo sampling. Specifically, the reported inference time excludes sentence-transformer encoding time, since we assume that all sentences have been encoded before the pruning process (i.e., a static environment) and the encoding time is the same for all methods. We report the average inference time per sample across the evaluated datasets.

\begin{table}[t]
\centering
\small
\caption{Overview of datasets, baselines, and inference LLMs used in our experiments.}
\label{tab:setup}
\begin{tabular}{lp{9.5cm}}
\toprule
\cellcolor{yellow!15}\textbf{Name} & \cellcolor{yellow!15}\textbf{Description} \\
\midrule
\multicolumn{2}{l}{\textit{Datasets}} \\
\midrule
MS MARCO~\cite{bajaj2016ms} & Large-scale passage retrieval (v2.1). Dev set: 25,147 samples. \\
HotpotQA~\cite{yang2018hotpotqa} & Multi-hop QA. Customized test split: 612 samples (cross-domain zero-shot). \\
MuSiQue~\cite{trivedi2022musique} & Compositional multi-hop questions (2--4 hops). 997 samples. \\
2WikiMultiHopQA~\cite{ho2020constructing} & Wikipedia/Wikidata relational reasoning. 7,737 samples. \\
FEVER~\cite{thorne2018fever} & Fact verification and evidence retrieval. 3,848 test samples. \\
RULER~\cite{hsieh2024rulerwhatsrealcontext} & Needle-in-haystack long-context retrieval (Paul Graham Haystack). \\
CL-Bench~\cite{clbench2026} & Long-context benchmark for LLM-perceived context pruning. \\
\midrule
\multicolumn{2}{l}{\textit{Baselines}} \\
\midrule
Random & Random context selection (lower bound). \\
BM25~\cite{robertson2009bm25} & Classical term-based retrieval. \\
CrossEncoder~\cite{wang2020minilmdeepselfattentiondistillation} & \texttt{ms-marco-MiniLM-L-6-v2}, a cross-encoder adapted from the original MiniLM. \\
LLMLingua-2~\cite{pan2024llmlingua2} & Data distillation-based prompt compression. \\
Provence~\cite{chirkova2025provence} & We use \texttt{open-provence-reranker-large-v1}\cite{yuichi-tateno-2025-open-provence} for experiments, which reproduced the provence model exactly from the method provided in the original paper. Another version \texttt{provence-reranker-debertav3-v1} has 0.4B parameters, which is the model provided by the authors. This model is a reproductive model by the community. This choice is made with the concern that engineering optimizations in the original implementation may introduce additional analytical complexity but not pure method comparison.\\
\midrule
\multicolumn{2}{l}{\textit{Inference LLMs}} \\
\midrule
GPT-5.4 \cite{openai2026gpt54}& Closed-source LLM (OpenAI). \\
Gemini 3 Pro \cite{deepmind2026gemini3}& Closed-source LLM (Google). \\
Gemini 3 Flash \cite{deepmind2026gemini3}& Closed-source LLM (Google). \\
Qwen-3.5-Plus \cite{yang2025qwen3technicalreport} & Open-source LLM (Alibaba). \\
Deepseek-R1 \cite{Guo_2025} & Open-source LLM (DeepSeek). \\
\bottomrule
\end{tabular}
\end{table}

\subsubsection{Experimental Setup and Implementation Details}
To ensure reproducibility, we outline the precise model architecture, training configuration, and deployment hardware used in our experiments.

\paragraph{Architecture} We instantiate the value function using a Deep Sets neural network. The feature extractor $\psi$ is a symmetric 3-layer Multi-Layer Perceptron (MLP) mapping the input embedding dimension down to a latent dimension, while the regressor $\rho$ is a 3-layer MLP mapping the latent sum back to a scalar value. Specifically, we set the intermediate hidden dimension to 1024 and the latent dimension of the Deep Sets bottleneck to 512. Across all layers of both $\psi$ and $\rho$, we utilize Layer Normalization, ReLU activations, and a Dropout rate of $0.1$. The resulting model comprises exactly 3.03M parameters. Standard textual representations are injected into $\psi$ using SentenceTransformer variants (e.g., MiniLM, 384 dimensions) precomputed for training efficiency.

\paragraph{Training Paradigm} The network is optimized via the Pairwise Margin Ranking Loss with the margin set to $\epsilon = 0.15$. We employ the AdamW optimizer with a weight decay of $1\text{e-4}$ and a peak learning rate of $1\text{e-3}$. Learning rate scheduling utilizes cosine annealing, preceded by a warmup period equal to roughly $10\%$ of total epochs. Training runs for $10$ epochs across all variants. Training batches gather context states from datasets configured strictly to an $80\%/10\%/10\%$ train/validation/test split rule. 

\paragraph{Hardware} Due to the aggressive $3\text{M}$ parameter capacity and cached contextual representation paradigm, an entire 10-epoch training procedure converges uniformly in merely several hours on two NVIDIA A800 80GB GPUs. Furthermore, the memory overhead associated directly with the SCP framework is extremely low (averaging less than 4GB of active VRAM requirement outside standard embedding storage), indicating the need of the development of the better training algorithm. At inference time, as shown previously, evaluating $M=50$ Monte-Carlo samples operates highly efficiently, however, we recommend higher sampling number in practice for better performance, as it can be parallelized computed on GPU.

\paragraph{LLM API Version} All experiments with LLM usage are done within 27 March 2026 and 15 April 2026 with Official API. This information is mentioned for reproduction.  

\subsection{Main Results} 

Main result is shown in Table~\ref{tab:auc_master}, Table~\ref{tab:llm_qa_master} and Table~\ref{tab:hotpotqa_mmmp}. We report the AUC performance of all methods across all datasets in Table~\ref{tab:auc_master}, and the downstream LLM QA performance (EM and F1) in Table~\ref{tab:llm_qa_master}. We also report the multi-model, multi-pruner results on HotpotQA in Table~\ref{tab:hotpotqa_mmmp}.

\begin{table}[t]
\centering
\caption{Complete AUC performance comparison across all datasets (k=50\% compression). Best in bold.}
\label{tab:auc_master}
\begin{tabular}{lccccc}
\toprule
\cellcolor{yellow!15}\textbf{Method} & \cellcolor{yellow!15}\textbf{HotpotQA} & \cellcolor{yellow!15}\textbf{MS MARCO} & \cellcolor{yellow!15}\textbf{MuSiQue} & \cellcolor{yellow!15}\textbf{2WikiMH} & \cellcolor{yellow!15}\textbf{FEVER} \\
\midrule
Random & 0.514 & 0.552 & 0.525 & 0.519 & 0.557 \\
BM25 \cite{robertson2009bm25} & 0.787 & 0.680 & 0.650 & 0.650 & 0.808 \\
CrossEncoder \cite{nogueira2020passagererankingbert} & 0.860 & \textbf{0.881} & 0.767 & 0.863 & 0.840 \\
Provence \cite{chirkova2025provence} & \textbf{0.872} & 0.766 & 0.744 & 0.749 & 0.878 \\
\textbf{SCP (MC=50)} & 0.779 & 0.773 & \textbf{0.946} & \textbf{0.946} & \textbf{0.927} \\
\bottomrule
\end{tabular}
\end{table}

\begin{table*}[t]
\centering
\small
\caption{Downstream LLM QA results (GPT-5.4) across all datasets. Best pruning result per dataset in bold.}
\label{tab:llm_qa_master}
\begin{tabular}{llccc}
\toprule
\cellcolor{yellow!15}\textbf{Dataset} & \cellcolor{yellow!15}\textbf{Method} & \cellcolor{yellow!15}\textbf{EM} & \cellcolor{yellow!15}\textbf{F1} & \cellcolor{yellow!15}\textbf{Compress} \\
\midrule
\multirow{4}{*}{HotpotQA}
 & Not Pruned (100\%) & 0.791 & 0.845 & 1.000 \\
 & BM25 (50\%) & 0.734 & 0.791 & 0.548 \\
 & \textbf{SCP} (50\%) & \textbf{0.774} & \textbf{0.830} & 0.508 \\
 & Provence (50\%) & 0.756 & 0.813 & 0.488 \\
 & LLMLingua2 (50\%) & 0.744 & 0.802 & 0.465 \\
\midrule
\multirow{4}{*}{2WikiMH}
 & Not Pruned & 0.859 & 0.870 & 1.000 \\
 & BM25 (50\%) & 0.767 & 0.791 & 0.539 \\
 & \textbf{SCP} (50\%) & \textbf{0.852} & \textbf{0.864} & 0.540 \\
 & LLMLingua2 (50\%) & 0.804 & 0.823 & 0.465 \\
 & Provence (50\%) & 0.785 & 0.801 & 0.487 \\
\midrule
\multirow{5}{*}{MuSiQue}
 & Not Pruned (100\%) & 0.533 & 0.662 & 1.000 \\
 & BM25 (50\%) & 0.372 & 0.482 & 0.509 \\
 & \textbf{SCP MS MARCO} (50\%, cross) & \textbf{0.466} & 0.585 & 0.515 \\
 & Provence (50\%) & 0.463 & \textbf{0.590} & 0.485 \\
 & SCP sentlevel split (50\%) & 0.441 & 0.565 & 0.526 \\
 & SCP paragraph$^*$ (50\%) & 0.389 & 0.512 & 0.506 \\
 & SCP sentlevel (no split, 50\%) & 0.377 & 0.496 & 0.493 \\
\bottomrule
\end{tabular}
\end{table*}

\begin{table*}
\centering
\caption{LLM QA (Judge, GPT-5.4) across Multi-Model, Multi-Pruners on HotpotQA. Best per model in bold.}
\label{tab:hotpotqa_mmmp}
\begin{tabular}{lcccc}
\toprule
\cellcolor{yellow!15}\textbf{Method} & \cellcolor{yellow!15}\textbf{GPT-5.4} & \cellcolor{yellow!15}\textbf{Qwen3.5+} & \cellcolor{yellow!15}\textbf{DeepSeek-R1} & \cellcolor{yellow!15}\textbf{Gemini Flash} \\
\midrule
None        & 0.589 & 0.853 & 0.802 & 0.758 \\
Random      & 0.734 & 0.578 & 0.613 & 0.519 \\
BM25        & 0.827 & 0.693 & 0.733 & 0.661 \\
LLMLingua2  & 0.856 & 0.742 & 0.842 & \textbf{0.856} \\
Provence    & 0.847 & 0.736 & \textbf{0.849} & 0.818 \\
\textbf{SCP} & \textbf{0.861} & \textbf{0.868} & 0.825 & 0.851 \\
\bottomrule
\end{tabular}
\end{table*}

\subsection{Fact Recalling Supplementary Experiments} 

\subsubsection{Cross-Domain Experiment}

Tables~\ref{tab:msmarco_sample} show how a model trained on MS MARCO generalizes to unseen datasets.

\subsubsection{CL-Bench Results} 

\begin{figure}
\centering
\includegraphics[width=1\textwidth]{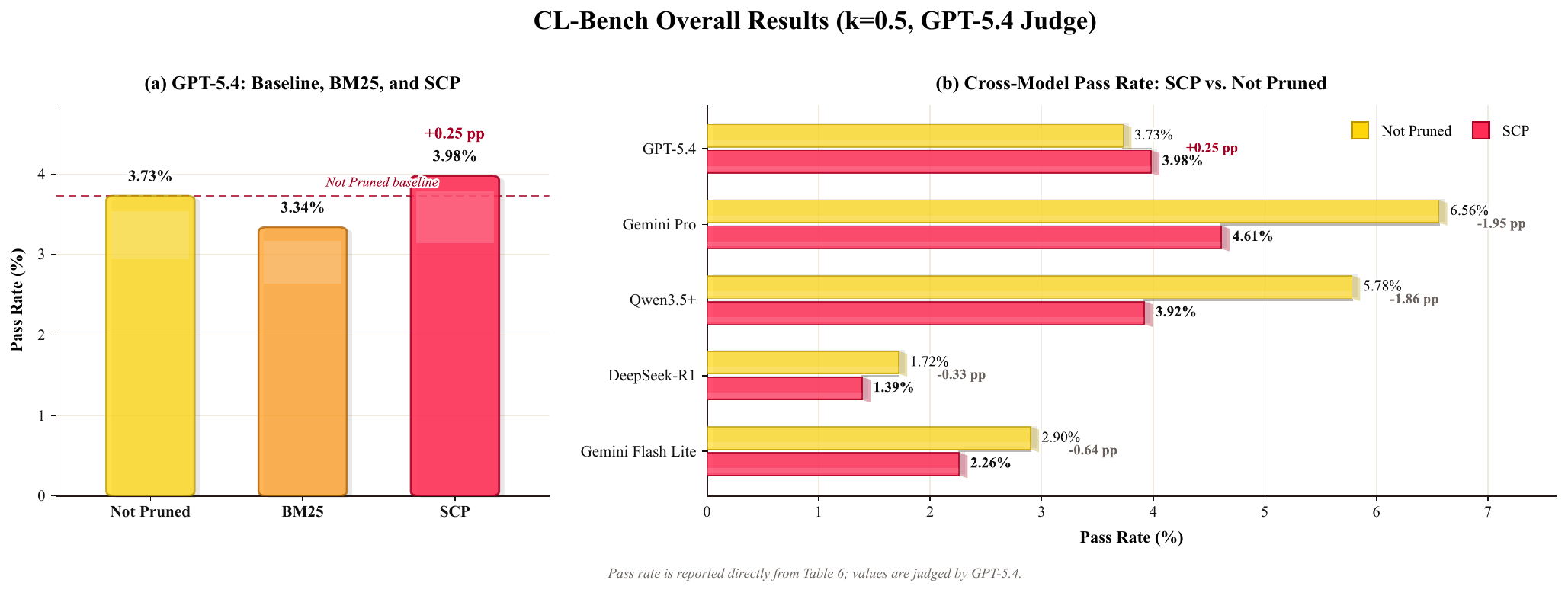}
\caption{CL-Bench overall results (k=0.5). Judged by GPT-5.4. GPT-5.4 performs not idealy on CL-Bench, mainly due to its format errors. We provide this result for supplementary reference, and we also report the ablation results with different judges in Table~\ref{tab:clbench_ablation}. SCP improve the performance of GPT-5.4 by 0.25\%, while BM25 hurts the performance by 0.39\%. However, this result is not statistically significant, and the performance of GPT-5.4 on CL-Bench is generally low, which may be due to the strict judging criteria of GPT-5.4. We include this result for completeness and reference, but we do not draw strong conclusions from it.}
\label{fig:clbench_overall}
\end{figure}

The main results for the CL-Bench experiment are shown in Table~\ref{tab:clbench_overall} and Figure~\ref{fig:clbench_overall}.

\begin{table}[h]
\centering
\caption{CL-Bench overall results (k=0.5). Judged by GPT-5.4.}
\label{tab:clbench_overall}
\begin{tabular}{lccl}
\toprule
\cellcolor{yellow!15}\textbf{Model} & \cellcolor{yellow!15}\textbf{Method} & \cellcolor{yellow!15}\textbf{Correct/Total} & \cellcolor{yellow!15}\textbf{Pass Rate} \\
\midrule
\multirow{3}{*}{GPT-5.4} & Baseline & 70/1875 & 3.73\% \\
 & BM25 & 63/1885 & 3.34\% ($-$0.39\%) \\
 & SCP & 74/1860 & 3.98\% (+0.25\%) \\
\midrule
\multirow{2}{*}{Gemini Pro} & Baseline & 124/1890 & 6.56\% \\
 & SCP & 80/1737 & 4.61\% ($-$1.95\%) \\
\midrule
\multirow{2}{*}{Qwen3.5+} & Baseline & 109/1885 & 5.78\% \\
 & SCP & 71/1810 & 3.92\% ($-$1.86\%) \\
\midrule
\multirow{2}{*}{DeepSeek-R1} & Baseline & 32/1858 & 1.72\% \\
 & SCP & 26/1873 & 1.39\% ($-$0.33\%) \\
\midrule
\multirow{2}{*}{Gemini Flash Lite} & Baseline & 55/1899 & 2.90\% \\
 & SCP & 43/1899 & 2.26\% ($-$0.64\%) \\
\bottomrule
\end{tabular}
\end{table}

Different from the official paper, we use GPT-5.4 as the judge, which is more strict than GPT-5.1 (Official used). See table \ref{tab:clbench_ablation}.

\begin{table}[h]
\centering
\caption{Performance of SCP on CL-Bench with Different Evaluating LLMs, Inference Model: GPT-5.4(not High), Not compressed}
\label{tab:clbench_ablation}
\begin{tabular}{lccccc}
\toprule
\cellcolor{yellow!15}\textbf{Evaluator} & \cellcolor{yellow!15}\textbf{Success Rate} & \cellcolor{yellow!15}\textbf{Format Error} & \cellcolor{yellow!15}\textbf{Context Ignore} & \cellcolor{yellow!15}\textbf{Context Misused} & \cellcolor{yellow!15}\textbf{Refusal} \\
\midrule
GPT-5.1 & 14.57\% & 84.1\% & 24.9\% & 23.7\% & 0.4\% \\
GPT-5.4 & 3.73\% & 73.7\% & 15.2\% & 7.6\% & 0.3\% \\
\end{tabular}
\end{table}

According to the official paper, the pair (GPT-5.2 High inference, GPT-5.1 judge) yields 18.1$\pm$0.8\% case
success, 33.9\% format error, 59.3\% context-ignore, 65.4\% context-misuse, and 2.4\% refusal. GPT-5.4
is more likely to produce format errors but less likely to produce context-ignore or context-misuse,
which suggests that GPT-5.4 enforces context content more strictly but is less strict about output
format. The behavior of the GPT series depends on the training objective set by OpenAI, which may change
across versions. We speculate that GPT-5.4 places stronger emphasis on factual accuracy and context
adherence, leading to more format errors when the model is forced to follow the context strictly,
whereas GPT-5.1 takes a more lenient approach to context adherence, leading to more context-ignore and
misuse cases.

Table~\ref{tab:clbench_rubric_bm25_scp} shows how SCP outperforms BM25---another popular coarse-grained retriever---on long-context QA, e.g.\ CL-Bench. We also report rubric-satisfaction bands. Detailed data comparing SCP and the unpruned context are given in Table~\ref{tab:clbench_rubric}.

\begin{table}[h]
\centering
\caption{CL-Bench requirement-satisfaction distribution (BM25 vs SCP, GPT-5.4 judge, $k=0.5$).}
\label{tab:clbench_rubric_bm25_scp}
\begin{tabular}{lccccc}
\toprule
\cellcolor{yellow!15}\textbf{Satisfaction} & \cellcolor{yellow!15}\textbf{BM25 n} & \cellcolor{yellow!15}\textbf{BM25 Pass} & \cellcolor{yellow!15}\textbf{SCP n} & \cellcolor{yellow!15}\textbf{SCP Pass} & \cellcolor{yellow!15}\textbf{Delta} \\
\midrule
0--20\% & 133 & 0.00\% & 142 & 0.00\% & +0.00\% \\
20--40\% & 281 & 0.00\% & 266 & 0.00\% & +0.00\% \\
40--60\% & 394 & 0.00\% & 375 & 0.00\% & +0.00\% \\
60--80\% & 568 & 0.00\% & 553 & 0.00\% & +0.00\% \\
80--100\% & 509 & 12.38\% & 524 & 14.12\% & +1.74\% \\
\bottomrule
\end{tabular}
\end{table}

\begin{table}[h]
\centering
\caption{CL-Bench results by rubric satisfaction bands.(GPT-5.4, k=0.5).}
\label{tab:clbench_rubric}
\begin{tabular}{lccccc}
\toprule
\cellcolor{yellow!15}\textbf{Satisfaction} & \cellcolor{yellow!15}\textbf{Baseline n} & \cellcolor{yellow!15}\textbf{Baseline Pass} & \cellcolor{yellow!15}\textbf{SCP n} & \cellcolor{yellow!15}\textbf{SCP Pass} & \cellcolor{yellow!15}\textbf{Delta} \\
\midrule
0--20\% & 153 & 0.00\% & 141 & 0.00\% & +0.00\% \\
20--40\% & 272 & 0.00\% & 264 & 0.00\% & +0.00\% \\
40--60\% & 381 & 0.00\% & 371 & 0.00\% & +0.00\% \\
60--80\% & 536 & 0.00\% & 543 & 0.00\% & +0.00\% \\
80--100\% & 529 & 13.23\% & 516 & 14.34\% & +1.11\% \\
\bottomrule
\end{tabular}
\end{table}

The ``Pass'' column reports how many tasks in each band were graded 100\% correct by the LLM. In the 0--80\% bands no task is fully correct, so the rate is automatically 0.

These tables show that SCP successfully reduces the number of answers in the low-satisfaction bands of the rubric and, as expected, occasionally prunes important information, which slightly lowers QA accuracy.

Table~\ref{tab:clbench_error_type_bm25_scp} reports the error-type distribution of the different methods.

\begin{table}[h]
\centering
\caption{Heuristic error-type ratios on failed CL-Bench samples (keyword tags over grading rationales, multi-label). Refusal is reported with a strict hard-refusal matcher; a broader inability-phrase rate is shown only as reference.}
\label{tab:clbench_error_type_bm25_scp}
\begin{tabular}{lcc}
\toprule
\cellcolor{yellow!15}\textbf{Error Type} & \cellcolor{yellow!15}\textbf{BM25 (n=1822)} & \cellcolor{yellow!15}\textbf{SCP (n=1820)} \\
\midrule
Format / structure errors & 59.17\% & 60.53\% \\
Missing key requirements & 52.85\% & 51.74\% \\
Context misuse / hallucination & 4.12\% & 4.14\% \\
Refusal mention in rationale (heuristic) & 13.83\% & 14.56\% \\
Hard refusal (strict output-level matcher) & 0.42\% & 0.48\% \\
Broad inability phrase (reference only) & 5.89\% & 5.54\% \\
Factual / logical mistakes & 19.26\% & 16.91\% \\
\bottomrule
\end{tabular}
\end{table}

\subsubsection{AUC Experiment Justification}\label{AUC_Just}

Table~\ref{tab:baseline_analysis} shows that SCP attains very high AUC on 
MuSiQue, FEVER, and 2WikiMH against the human-annotated supporting labels. 
However, on MuSiQue this advantage does not translate to a downstream win 
in LLM QA (Table~\ref{tab:llm_qa_master}). We attribute this to a cascading 
issue in the official MuSiQue annotation:

\begin{itemize}
    \item \textbf{Paragraph-level supervision.} MuSiQue provides supporting 
    labels at paragraph granularity. To train a sentence-level reranker we 
    must propagate each paragraph label to every sentence inside it.
    \item \textbf{Distractor leakage after propagation.} A ``supporting 
    paragraph'' typically contains only 1--2 bridge sentences and several 
    filler sentences that are not themselves supporting. The propagation 
    therefore inserts a substantial number of non-supporting sentences into 
    the positive set of our pairwise margin loss.
    \item \textbf{Overfitting to noisy labels.} Training directly on MuSiQue 
    fits a corrupted ranking objective. A model trained on MS MARCO, which 
    carries cleaner sentence-level supervision, transfers back to MuSiQue 
    better than either MuSiQue-trained variant (EM 0.466 vs.\ 0.441 / 0.389 
    / 0.377; see Table~\ref{tab:llm_qa_master}).
\end{itemize}

The broader insight is that \textbf{AUC against human annotations is 
correlated with downstream LLM QA performance, but the two are not 
equivalent}. When the human labels themselves are noisy, fitting them 
more tightly (higher AUC) does not yield more useful pruning. We therefore 
report downstream EM/F1 in Table~\ref{tab:llm_qa_master} as the primary 
metric and treat AUC as a diagnostic.

\begin{table}[h]
\centering
\caption{Extended baseline analysis with ranking and performance gaps.}
\label{tab:baseline_analysis}
\small
\begin{tabular}{lcccccc}
\toprule
\cellcolor{yellow!15}\textbf{Dataset} & \cellcolor{yellow!15}\textbf{Top1} & \cellcolor{yellow!15}\textbf{Top1 AUC} & \cellcolor{yellow!15}\textbf{SCP Rank} & \cellcolor{yellow!15}\textbf{vs BM25} & \cellcolor{yellow!15}\textbf{vs CE} & \cellcolor{yellow!15}\textbf{vs Prov} \\
\midrule
HotpotQA & Provence & 0.872 & 4th & $-$0.008 & $-$0.081 & $-$0.093 \\
MS MARCO & CrossEncoder & 0.881 & 2nd & +0.092 & $-$0.109 & +0.007 \\
MuSiQue & SCP & 0.9381 & 1st & +0.288 & +0.171 & +0.194 \\
FEVER & SCP & 0.927 & 1st & +0.120 & +0.088 & +0.050 \\
2WikiMH & SCP & 0.946 & 1st & +0.296 & +0.083 & +0.197 \\
\bottomrule
\end{tabular}
\end{table}

\subsubsection{Compression Rate Study for SCP on HotpotQA}

Table~\ref{tab:compression_hotpotqa} shows how different compression rates affect SCP's performance.

\begin{table}[h]
\centering
\caption{SCP performance under different compression rates on HotpotQA (GPT-5.4).}
\label{tab:compression_hotpotqa}
\begin{tabular}{llccc}
\toprule
\cellcolor{yellow!15}\textbf{SCP keep\_ratio} & \cellcolor{yellow!15}\textbf{Description} & \cellcolor{yellow!15}\textbf{EM} & \cellcolor{yellow!15}\textbf{F1} & \cellcolor{yellow!15}\textbf{Actual Keep} \\
\midrule
100\% (no compression) & baseline & 0.791 & 0.845 & 1.000 \\
50\% (kr05) & keep half & 0.774 & 0.830 & 0.508 \\
7\% (kr07) & very sparse & 0.629 & 0.683 & 0.066 \\
3\% (kr03) & very sparse & 0.576 & 0.627 & 0.031 \\
\bottomrule
\end{tabular}
\end{table}

\subsubsection{NIAH Supplementary Details}

Tables~\ref{tab:niah_length} and~\ref{tab:niah_depth} show how SCP's recall performance changes under the single-needle setting.

\begin{table*}[t]
\centering
\small
\caption{NIAH Performance Across Different Context Lengths}
\label{tab:niah_length}
\begin{tabular}{lcccccccc}
\toprule
\cellcolor{yellow!15}\textbf{Method} & \cellcolor{yellow!15}\textbf{512} & \cellcolor{yellow!15}\textbf{1K} & \cellcolor{yellow!15}\textbf{2K} & \cellcolor{yellow!15}\textbf{4K} & \cellcolor{yellow!15}\textbf{8K} & \cellcolor{yellow!15}\textbf{16K} & \cellcolor{yellow!15}\textbf{32K} & \cellcolor{yellow!15}\textbf{Average} \\
\midrule
none (100\%) & 1.00 & 1.00 & 1.00 & 1.00 & 1.00 & 1.00 & 1.00 & 1.00 \\
SCP 30\% & 1.00 & 0.98 & 1.00 & 1.00 & 1.00 & 0.98 & 0.98 & 0.991 \\
SCP 50\% & 1.00 & 1.00 & 1.00 & 1.00 & 1.00 & 1.00 & 0.98 & 0.997 \\
SCP 70\% & 1.00 & 1.00 & 1.00 & 1.00 & 1.00 & 1.00 & 1.00 & 1.000 \\
BM25 30-70\% & 1.00 & 1.00 & 1.00 & 1.00 & 1.00 & 1.00 & 1.00 & 1.000 \\
Random 30\% & 0.30 & 0.22 & 0.24 & 0.38 & 0.26 & 0.30 & 0.24 & 0.277 \\
Random 50\% & 0.44 & 0.46 & 0.44 & 0.52 & 0.44 & 0.48 & 0.58 & 0.480 \\
Random 70\% & 0.74 & 0.66 & 0.66 & 0.68 & 0.74 & 0.78 & 0.74 & 0.714 \\
\bottomrule
\end{tabular}
\end{table*}

\begin{table}[h]
\centering
\caption{NIAH Depth Curve Results}
\label{tab:niah_depth}
\begin{tabular}{lccccc}
\toprule
\cellcolor{yellow!15}\textbf{Method} & \cellcolor{yellow!15}\textbf{d=0.1} & \cellcolor{yellow!15}\textbf{d=0.25} & \cellcolor{yellow!15}\textbf{d=0.5} & \cellcolor{yellow!15}\textbf{d=0.75} & \cellcolor{yellow!15}\textbf{d=0.9} \\
\midrule
none & 1.000 & 1.000 & 1.000 & 1.000 & 1.000\\
SCP 30\% & 1.000 & 0.986 & 0.986 & 0.986 & 1.000\\
BM25 30\% & 1.000 & 1.000 & 1.000 & 1.000 & 1.000\\
Random 30\% & 0.257 & 0.271 & 0.214 & 0.314 & 0.329\\
\bottomrule
\end{tabular}
\end{table}

Table~\ref{tab:niah_key_retrieval} reports the performance of reranking and pruning when there are two needles in the context.

\begin{table}[h]
\centering
\caption{NIAH Key Retrieval Results (Two Sentences)}
\label{tab:niah_key_retrieval}
\begin{tabular}{llcccccccc}
\toprule
\cellcolor{yellow!15}\textbf{Method} & \cellcolor{yellow!15}\textbf{KR} & \cellcolor{yellow!15}\textbf{512} & \cellcolor{yellow!15}\textbf{1K} & \cellcolor{yellow!15}\textbf{2K} & \cellcolor{yellow!15}\textbf{4K} & \cellcolor{yellow!15}\textbf{8K} & \cellcolor{yellow!15}\textbf{16K} & \cellcolor{yellow!15}\textbf{32K} & \cellcolor{yellow!15}\textbf{AVG} \\
\midrule
SCP & 30\% & 1.000 & 1.000 & 0.992 & 0.996 & 0.988 & 0.968 & 0.960 & 0.986 \\
SCP & 50\% & 1.000 & 1.000 & 1.000 & 0.992 & 1.000 & 0.992 & 0.988 & 0.996 \\
BM25 & 30\% & 1.000 & 1.000 & 1.000 & 1.000 & 1.000 & 1.000 & 1.000 & 1.000 \\
Random & 30\% & 0.056 & 0.088 & 0.076 & 0.080 & 0.120 & 0.092 & 0.092 & 0.086 \\
Random & 50\% & 0.248 & 0.272 & 0.228 & 0.244 & 0.264 & 0.220 & 0.224 & 0.243 \\
\bottomrule
\end{tabular}
\end{table}

We further test SCP with five needles under different keep ratios to check whether performance degrades. See Table~\ref{tab:niah_mk5_loo_scp}.

\begin{table}[h]
\centering
\small
\caption{5 needle NIAH: all-needle recall under different keep ratios.}
\label{tab:niah_mk5_loo_scp}
\begin{tabular}{lccccc}
\toprule
\cellcolor{yellow!15}\textbf{Method} & \cellcolor{yellow!15}\textbf{Estimator} & \cellcolor{yellow!15}\textbf{KR=0.1} & \cellcolor{yellow!15}\textbf{KR=0.2} & \cellcolor{yellow!15}\textbf{KR=0.3} & \cellcolor{yellow!15}\textbf{Mean} \\
\midrule
LOO & exact & 0.431 & 0.749 & 0.874 & 0.685 \\
SCP & MC=50 & 0.411 & 0.711 & 0.843 & 0.655 \\
SCP & MC=200 & 0.451 & 0.743 & 0.849 & 0.681 \\
SCP & MC=500 & 0.443 & 0.757 & 0.860 & 0.687 \\
SCP & MC=1000 & 0.457 & 0.759 & 0.849 & 0.688 \\
\bottomrule
\end{tabular}
\end{table}

\subsection{Ablations}

\subsubsection{Embedders}

Table~\ref{tab:ablation_embed} compares the performance of SCP when using different embedding mechanisms. The default setting uses Sentence Transformer \cite{reimers2019sentencebertsentenceembeddingsusing} with MiniLM \cite{wang2020minilmdeepselfattentiondistillation}, which is a common choice for sentence-level embeddings. We also test BERT, Word2Vec, TF-IDF with SVD dimensionality reduction, and random embeddings as baselines. 

\begin{table}[h]
\centering
\caption{Ablation: Evaluating SCP Performance under Different Embedding Constraints on HotpotQA}
\label{tab:ablation_embed}
\begin{tabular}{lcccc}
\toprule
\cellcolor{yellow!15}\textbf{Embedding Mechanism} & \cellcolor{yellow!15}\textbf{AUC} & \cellcolor{yellow!15}\textbf{Recall@0.3} & \cellcolor{yellow!15}\textbf{Recall@0.5} & \cellcolor{yellow!15}\textbf{Recall@0.7} \\
\midrule
SentenceTransformer (MiniLM, Default) & 0.779 & 0.642 & 0.800 & 0.906 \\
Word2Vec & 0.750 & 0.630 & 0.790 & 0.895 \\
TF-IDF + SVD & 0.588 & 0.450 & 0.620 & 0.750 \\
Random & 0.507 & 0.380 & 0.520 & 0.650 \\
BERT(Deepsets size: 3.42M > 3.03M) & 0.858 & 0.743 & 0.896 & 0.924 \\
\bottomrule
\end{tabular}
\end{table}

\subsubsection{Attribution Strategies Supplementary Details}\label{sec:LOO}

This section is a controlled study of the LOO attribution \emph{strategy} itself, not a re-implementation of LooComp~\cite{do2026loocompleverageleaveoneoutstrategy}. LooComp couples its LOO signal to a LoRA adapter over a pretrained language model, whereas we hold the value function and embedding backbone fixed and only swap the attribution rule. This isolates the contribution of the attribution principle (LOO vs. Shapley) from confounding factors such as backbone capacity or LoRA fine-tuning.

Table~\ref{tab:loo_vs_shapley_updated} reports the matched-setting comparison between SCP (MC-Shapley) and pure LOO attribution, where both methods use the same learned set-value model and embedding backbone. See Table~\ref{tab:niah_mk5_loo_scp} for the number of Monte-Carlo samples needed to match or exceed the performance of the LOO strategy on the NIAH experiment.

\begin{table}[h]
\centering
\small
\caption{SCP vs. LOO attribution under matched value-function settings.}
\label{tab:loo_vs_shapley_updated}
\begin{tabular}{lccccc}
\toprule
\cellcolor{yellow!15}\textbf{Dataset} & \cellcolor{yellow!15}\textbf{AUC (SCP)} & \cellcolor{yellow!15}\textbf{AUC (LOO)} & \cellcolor{yellow!15}$\Delta$ (SCP-LOO) & \cellcolor{yellow!15}\textbf{R@0.5 (S/L)} & \cellcolor{yellow!15}\textbf{Time s (S/L)} \\
\midrule
HotpotQA (in-domain) & 0.7835 & 0.7789 & 0.0045 &  0.8436 /0.8412 & 1.9 / 0.8 \\
MS MARCO (full) & 0.7729 & 0.7726 & 0.0003 & 0.8157 / 0.8156 & 89.6 / 47.2 \\
MS MARCO (8k cache) & 0.7202 & 0.7175 & 0.0026 & 0.7459 / 0.7456 & 72.8 / 66.6 \\
\bottomrule
\end{tabular}
\end{table}

Using the LOO strategy at inference can be faster and more effective on these benchmarks because of its low variance (SCP relies on Monte-Carlo sampling). In real-world RAG systems, however, semantically duplicated supporting sentences are not guaranteed to be filtered, so the speed gain is obtained at the cost of robustness. LOO mistakes a duplicated supporting sentence for a dummy player because $f(S \cup \{i\}) - f(S) = 0$ when an equivalent sentence already lies in $S$. Using the trained value function we instead compute $v(S \cup \{i\}) - v(S)$ to score a sentence directly. We provide an adversarial example for this strategy below; scores are rounded to one decimal place.

\paragraph{Updated dose + issuer with old-version distractor.} \textit{Q: According to the latest protocol, what is the standard analgesic dose, and who issued it?} The results are reported in Table~\ref{tab:redatk_g3}.

\begin{table}[h]
\centering
\small
\caption{Per-sentence LOO and SCP scores}
\label{tab:redatk_g3}
\begin{tabular}{c p{8.0cm} c r r}
\toprule
\cellcolor{yellow!15}\textbf{\#} & \cellcolor{yellow!15}\textbf{Sentence} & \cellcolor{yellow!15}\textbf{Role} & \cellcolor{yellow!15}\textbf{LOO} & \cellcolor{yellow!15}\textbf{SCP} \\
\midrule
S1  & The old manual states: standard analgesic dose is 5~mg.    & distractor (old)   & -1 & -0.8   \\
S2  & Update notice: the standard dose has been adjusted to 8~mg.& G1$^{\dagger}$     & 0 & 0.7 \\
S3  & The 8~mg standard dose is currently in force.              & G1$^{\dagger}$     & 0.1 & 0.8 \\
S4  & This dose update was issued by Medical Officer Gu~Ning.    & G2$^{\dagger}$     & 0 & 0.5 \\
S5  & Gu~Ning issued the present 8~mg adjustment.                & G2$^{\dagger}$     & -0.1 & 0.5 \\
S6  & The old 5~mg clause has been rescinded.                    & G3                 & 1.2 & 1.3 \\
S7  & Pharmacy stock cap remains 200 vials.                      & noise              & 0 & 0   \\
S8  & Hallway lights enter energy-saving mode after 22:00.       & noise              & -0.1 & -0.1   \\
S9  & A lost access card must be reported within 24~hours.       & noise              & 0 & 0   \\
S10 & Morning meeting moved from 08:00 to 08:30.                 & noise              & 0 & 0   \\
\bottomrule
\end{tabular}
\end{table}

We observe that LOO significantly underestimates duplicated supporting sentences, which makes its selection degenerate toward random. Furthermore, the accuracy of LOO is fixed since the computation is deterministic, whereas SCP can improve with more samples.

\subsubsection{Monte-Carlo Sampling Number} 

Table~\ref{tab:msmarco_sample} shows how the number of Monte-Carlo samples (MC) affects SCP's performance on HotpotQA when trained on MS MARCO. Spped here only refers to the inference time of SCP itself, the embedding time is not included. In practical implementation, we recommend using batch pre-embedding and GPU-parallelized sampling, which can be easily optimized in comparison to the Provence, LLMLingua2 and other methods.

\begin{table}[h]
\centering
\caption{Sampling Number Ablation for SCP trained on MS MARCO, tested on HotpotQA (dev subset, 612 samples).}
\label{tab:msmarco_sample}
\begin{tabular}{llcccccc}
\toprule
\cellcolor{yellow!15}\textbf{Method} & \cellcolor{yellow!15}\textbf{Params} & \cellcolor{yellow!15}\textbf{AUC} & \cellcolor{yellow!15}\textbf{R@0.3} & \cellcolor{yellow!15}\textbf{R@0.5} & \cellcolor{yellow!15}\textbf{R@0.7} & \cellcolor{yellow!15}\textbf{Speed} \\
\midrule
Random & 0 & 0.514 $\pm$ 0.010 & 0.286 & 0.491 & 0.688 & --- \\
BM25 & 0 & 0.787 & 0.714 & 0.832 & 0.899 & 0.4s/612 \\
\midrule
\textbf{SCP (mc10)} & \textbf{3M} & \textbf{0.719} & 0.578 & 0.792 & 0.894 & 1.9s/612 \\
\textbf{SCP (mc50)} & \textbf{3M} & \textbf{0.750} & 0.642 & 0.814 & 0.906 & 2.5s/612 \\
\textbf{SCP (mc150)} & \textbf{3M} & \textbf{0.752} & 0.633 & 0.820 & 0.922 & 4.6s/612 \\
CrossEncoder & 22M & 0.860 & 0.851 & 0.910 & 0.943 & 13.2s/612 \\
Provence & 149M & 0.872 & 0.875 & 0.944 & 0.980 & 36.3s/612 \\
\bottomrule
\end{tabular}
\end{table}

\subsection{Prompt Usage in Inference}

Table~\ref{tab:prompt-usage} lists the system prompts and generation settings used for LLM-side evaluation across QA datasets. HotpotQA and 2WikiMultiHopQA share the same system prompt; MuSiQue uses a stricter variant with a tighter output budget; CL-Bench reuses the dataset-released \texttt{messages} verbatim.

\begin{table}[h]
\centering
\caption{System prompts and generation settings across QA datasets.}
\label{tab:prompt-usage}
\footnotesize
\begin{tabular}{p{1.6cm}p{7.8cm}cc}
\toprule
\cellcolor{yellow!15}\textbf{Dataset} & \cellcolor{yellow!15}\textbf{System Prompt} & \cellcolor{yellow!15}\textbf{max\_tok.} & \cellcolor{yellow!15}\textbf{temp.} \\
\midrule
HotpotQA & You are a question answering assistant. Answer the question based on the provided context. Give a concise answer --- typically 1--5 words for factoid questions. Do NOT explain. & 64 & 0.0 \\
2WikiMHQA & Same as HotpotQA. & 64 & 0.0 \\
MuSiQue & You are a question answering assistant. Answer the question based on the provided context. Give a concise answer --- a short phrase only (1--5 words). Do NOT explain. & 32 & 0.0 \\
CL-Bench & Dataset-released \texttt{messages} used verbatim; no additional system prompt. & default & default \\
\bottomrule
\end{tabular}
\end{table}

\section{Justifications, Hypotheses and More Future Works} 

\subsection{Why 3M parameters?}\label{sec:why-3M}

We intentionally design the SCP model to be substantially smaller than competing models in order to show that the performance gains do not stem from raw model capacity alone but also from the evaluation paradigm. With only 3M parameters, a lightweight model can still capture useful interdependencies among context components under the Shapley-based training objective. Our rationale is to decouple attribution complexity from the neural representation: we delegate the attribution computation to the Monte-Carlo sampling mechanism, while the neural value function approximates holistic subset values---a simpler prediction task. Practically, this design reduces inference time, since the value function only needs lightweight forward passes and the Monte-Carlo aggregation can be parallelized on the GPU.

\subsection{Impact of the Convexity Constraint}\label{sec:conv-const}

As described in appendix A, modeling supermodular valuations enforces a coherent mathematical
hierarchy---the foundation for analysing the ``Landscape of Context''. To empirically transition from
heuristic sets to a rigorous core, we implement the structurally restricted \texttt{ConvexDeepSetsV2}.

Imposing continuous convexity bounds the hypothesis space. Historically this can reduce
predictive accuracy on elementary lexical tasks (e.g., a slight degradation on MS MARCO). However, on
combinatorial tasks (HotpotQA, MuSiQue), the convex prior is more aligned with our conjecture that
some bridging components exhibit complementarity (e.g., $v(A \cup B) > v(A) + v(B)$). In this sense,
enforcing convexity acts as an inductive bias for capturing synergy in multi-hop chains and provides
empirical support for the proposed hierarchical Context-Landscape perspective.

Table~\ref{tab:convexv2_comparison} reveals nuanced trade-offs between the unconstrained
SCP and ConvexV2. Rather than positioning ConvexV2 as a universally superior variant, these results
illuminate a fundamental alignment between architectural inductive biases and the underlying
combinatorial structure of different reasoning tasks.

\begin{table}[h]
\centering
\caption{ConvexV2 vs. Original SCP: Best AUC Comparison}
\label{tab:convexv2_comparison}
\begin{tabular}{lccc}
\toprule
\cellcolor{yellow!15}\textbf{Dataset} & \cellcolor{yellow!15}\textbf{Original SCP} & \cellcolor{yellow!15}\textbf{ConvexV2} & \cellcolor{yellow!15}$\Delta$ \\
\midrule
HotpotQA & 0.7787 & \textbf{0.7945} & +0.016 \\
MS MARCO & \textbf{0.7622} & 0.6966 & $-$0.066 \\
MuSiQue & \textbf{0.9459} & 0.9345 & $-$0.0114 \\
FEVER & 0.9279 & \textbf{0.9327} & +0.005 \\
\bottomrule
\end{tabular}
\end{table}

\subsection{Ablation Explanation for Feature Extraction}\label{sec:feat-ext}

Notably, downgrading the embedding input from a deep bidirectional contextual model (SentenceTransformer)
to a shallow lexical model (Word2Vec) induces only a $\sim$3\% drop in AUC on HotpotQA while retaining
strong multi-hop performance. This suggests that SCP's effectiveness is not solely due to a
powerful sentence embedding model, and that the cooperative aggregation mechanism (the $\rho$ layer and
the Shapley allocations) plays an important role.

Nevertheless, a baseline level of embedding quality is still required to capture basic semantic relevance,
as evidenced by the substantial performance drop when using TF-IDF + SVD or random embeddings.
This indicates that while SCP's Shapley-based training objective contributes to its robustness,
it still relies on a certain level of semantic encoding in the input features to model
context dependencies effectively. The ablation supports the view that at the in-sentence-token level the embedding
requires a fine-grained understanding of the query and the context---especially the position of each token
in the sentence---while at the in-document-sentence level a more coarse-grained evaluation of the overall
semantic relevance of the sentence to the query is sufficient for SCP to capture the interdependencies
among sentences.

\subsection{Architecture Justification for the Permutation-Invariant Design and training-inference alignment}\label{sec:per_inv}

In Natural Language Processing, positional encodings and order information are usually presumed to be necessary. However, although positional encodings are essential at smaller scales, in context pruning---especially in the long-context regime---the well-known ``lost in the middle'' phenomenon is widely reported, and set modeling has proven effective in many long-context scenarios, particularly in RAG. In RAG, retrieved sentences are not organized in any fixed order, and the retrieval system may return sentences in arbitrary order. We hypothesize that sentences are not simply organized in a sequence but rather form a complex logical hierarchy, which we call the ``Landscape of Context''.

\paragraph{Design Rationale: Why Pairwise Ranking Suffices for Coarse-Grained Pruning}
We deliberately employ a pairwise margin ranking loss rather than a regression-based objective for three reasons rooted in the practical realities of RAG context pruning. 
First, \textbf{Dataset Annotation Constraints}: Standard RAG datasets (e.g., HotpotQA, MuSiQue, MS MARCO) provide only binary supporting-sentence labels; human annotators mark whether a sentence is relevant, but not its degree of relevance relative to other supporting sentences. Obtaining such fine-grained comparative labels would require prohibitively expensive additional annotation. 
Second, \textbf{Task-level Requirements}: Our goal is not to select the single most important sentence, but to filter out non-supporting sentences while retaining as many supporting sentences as possible. This is a coarse-grained pruning objective: by operating at a conservative compression ratio, we prioritize recall over precision at the pruning stage, ensuring that even if the ranking among supporting sentences is imperfect, critical evidence is not lost. The top-$K$ pruning operation is mathematically a ranking truncation: it depends only on the relative ordering $\phi_{(1)} \ge \phi_{(2)} \ge \dots \ge \phi_{(n)}$, not on absolute magnitudes. Formally, for any strictly increasing monotonic transformation $f$, the top-$K$ set satisfies $\{i : \phi_i \ge \phi_{(K)}\} = \{i : f(\phi_i) \ge f(\phi_{(K)})\}$. Since Shapley values are homogeneous of degree one in the value function—$\phi_i(\alpha v) = \alpha \cdot \phi_i(v)$—the absolute scale is inherently arbitrary and only relative comparisons are structurally meaningful in cooperative game theory. 
Third, \textbf{Generalization across LLMs}: The absolute importance of a sentence is inherently dependent on the downstream LLM's internal knowledge and reasoning patterns. Attempting to learn LLM-specific absolute importance scores would harm cross-model generalization. A relative ranking objective, by contrast, learns an LLM-agnostic ordering that transfers robustly across diverse generators.

\subsection{Top-K Justification}\label{sec:top-k}

While Top-K and Average-Threshold are direct, empirically effective implementations, we hope that future work will explore more sophisticated pruning criteria that leverage the full distribution of Shapley values, such as clustering-based methods or dynamic thresholding based on the variance of attributions. The mathematical flexibility of Shapley values supports a wide range of pruning strategies beyond simple cutoffs, which can be tailored to specific downstream tasks and computational constraints, as much prior work has explored. In our setting, however, given the low parameter count of our value function and the use of sentence-level pruning, the simple Top-K and Average-Threshold methods already achieve competitive performance across the evaluated datasets, since SCP is designed primarily as a preliminary reranker and pruner for the retrieved context, after which the downstream LLM can further filter and use the pruned context to generate the final answer.

\subsection{The Landscape of Context: Scale Change May Means Paradigm Shift}

In most popular NLP settings, the necessity of positional encoding is taken for granted.
Historically, the absence of sequence awareness in the original self-attention mechanism
(e.g., its inability to distinguish ``dog chases cat'' from ``cat chases dog'') prompted the adoption of
positional embeddings in Transformer architectures. However, as context length scales exponentially
in the LLM era, the prevalent ``lost in the middle'' phenomenon suggests that models struggle
to allocate attention uniformly across long, linear sequences.

This awareness is not new. In earlier years, researchers developed techniques such as Latent Dirichlet Allocation (LDA) \cite{10.5555/944919.944937} to classify documents into a limited set of topics, Set Transformers \cite{lee2019settransformerframeworkattentionbased} to process unordered sets, and Deep Sets to model permutation-invariant functions. The advent of the Transformer architecture \cite{vaswani2023attentionneed}, however, encouraged the assumption that any context could be effectively processed through an adapted attention mechanism. As a result, work that adapts attention mechanisms has received the bulk of attention, while other theoretical perspectives have been overlooked.

We propose Shapley Context Pruning (SCP) as an attempt to construct an analytical framework based on cooperative game theory, a field thoroughly studied in economics and mathematics. The theory provides a new perspective on the structure of context and the interactions between sentences, which can be used to guide the design of more effective context-reranking methods and to provide a principled tool for analysing the structure of the context. This raises the question of whether macro-level sequence order may be fundamentally less important than micro-level token order in certain contexts, especially in RAG systems where retrieved passages are returned in non-deterministic orders without strict sequential causality.

This leads to a paradigm-shifting hypothesis:
\textbf{In extreme-long-context Retrieval-Augmented Generation (RAG) \cite{lewis2021retrievalaugmentedgenerationknowledgeintensivenlp} settings, macro-level
sequence order is fundamentally less important than micro-level token order, though sequence modeling in sentence level is essential.} While positional
information is undeniably crucial intra-sentence to preserve semantic and syntactic integrity, inter-sentence organization in RAG systems and complex user prompts is often an unordered ``Bag-of-Evidence''. Retrieved passages are frequently provided in non-deterministic orders without strict sequential causality.

The hypothesis suggests a possible paradigm: sequential modeling and unordered set modeling are two methods operating at different scales. The former is more suitable for modeling token-level interactions within a sentence, while the latter is more suitable for modeling sentence-level interactions within a long context. Recent work such as Set-Encoder \cite{Schlatt_2025} applies a Set Transformer at the document level to implement cross-document attention while still using BERT-based embeddings, researchers start to explore setwise modeling rather than simple pairwise, pointwise, listwise modeling. Combining set-based methods at the larger scale with position-aware methods at the smaller scale may be a more effective way to model the context, particularly in the RAG scenario.

We therefore argue against the forced sequential projection of long contexts. We instead propose the ``Landscape of Context''~\ref{sec:landscape}, a theoretical abstraction that conceptualises context not as a flat sequence but as a hierarchical topology of subsets.

We propose this not as a closed problem but as a conceptual genesis. We leave the rigorous mathematical formulation of this landscape's exact boundaries, its adaptation to non-convex empirical settings, and its broader applications in LLM alignment as open invitations to the research community. By liberating context modeling from the strict inductive biases of linear positional embeddings, we hope that future theoretical work will explore the ``Landscape of Context'' architecture in depth and its fundamental implications for next-generation context engineering.

\subsection{More Future Work and Hypothesis}\label{sec:more_future}

We believe that, as context grows larger and more complex, SCP can offer a less fine-grained (compared with LLM- and BERT-based methods) but effective, fast, and interpretable pruning method for preliminary context compression in RAG systems and edge-device deployment, saving tokens and reducing the LLM-hallucination risk induced by distractors. We suggest that future work explore a multi-layer context-pruning pipeline, where SCP serves as a fast and interpretable first stage that filters out the most irrelevant components, followed by more fine-grained methods (e.g., BERT-based) for further pruning. Such a multi-layer approach can balance efficiency and precision, leveraging the strengths of different pruning techniques at different stages of the pipeline. SCP can also be deployed on local devices to perform preliminary context compression before sending the pruned context to the cloud for LLM inference, saving bandwidth and reducing latency in real-world applications.

\paragraph{Towards Incremental Context Engineering.} The current SCP framework computes Shapley values in batch. In real-world RAG, however, retrieved sentences arrive incrementally, and recomputing all attributions from scratch is wasteful. A promising direction is to exploit the structural properties of convex games to derive \emph{incremental update rules}. Future work could develop online algorithms that maintain a laminar tree dynamically, inserting new nodes and rebalancing only along the affected path rather than reconstructing the entire hierarchy.

\paragraph{Forest Models for Multi-Topic Contexts.}\label{para:solving_non_convex} A single convex game assumes that the entire context is one large cooperative structure. Real documents, however, typically contain multiple semantically disjoint topics or reasoning chains. We propose modelling such contexts as a \emph{forest of convex games}: each tree corresponds to an independent topic or multi-hop chain (a local convex game), while the forest captures the weak or negligible cross-topic interactions. Under the idealised assumption that cross-topic synergy vanishes, the value function becomes block-additive across a partition $\{N_j\}$ of $N$:
\[
v(S) \;=\; \sum_{j} v_j\bigl(S \cap N_j\bigr), \qquad \forall\, S \subseteq N.
\]
To see that attributions can then be computed locally, define the extension $\tilde{v}_j(S) = v_j(S \cap N_j)$ for all $S \subseteq N$, so that $v = \sum_j \tilde{v}_j$. By Additivity,
\[
\phi_i(v) \;=\; \sum_{j} \phi_i(\tilde{v}_j).
\]
For any block $N_j$ not containing $i$, player $i$ is a dummy in $\tilde{v}_j$ (adding $i$ does not change $S \cap N_j$); hence $\phi_i(\tilde{v}_j)=0$ by the Dummy-Player axiom. It remains to relate $\phi_i(\tilde{v}_{j^*})$ to the block-level Shapley value for the unique block $N_{j^*} \ni i$. Expanding the definition, consider any $S \subseteq N \setminus \{i\}$ and define $T = S \cap N_{j^*}$ and $U = S \cap (N \setminus N_{j^*})$, so that $S = T \cup U$ and $T \cap U = \emptyset$.
\begin{align*}
\phi_i(\tilde{v}_{j^*})
&= \sum_{S \subseteq N \setminus \{i\}} \frac{|S|!\,(|N|-|S|-1)!}{|N|!}
   \bigl[\tilde{v}_{j^*}(S\cup\{i\}) - \tilde{v}_{j^*}(S)\bigr] \\
&= \sum_{T \subseteq N_{j^*}\setminus\{i\}}
   \Biggl(
     \sum_{U \subseteq N \setminus N_{j^*}}
     \frac{(|T|+|U|)!\,(|N|-|T|-|U|-1)!}{|N|!}
   \Biggr)
   \bigl[v_{j^*}(T\cup\{i\}) - v_{j^*}(T)\bigr].
\end{align*}
The inner sum over $U$ satisfies the combinatorial identity
\[
\sum_{U \subseteq N \setminus N_{j^*}}
\frac{(|T|+|U|)!\,(|N|-|T|-|U|-1)!}{|N|!}
\;=\;
\frac{|T|!\,(|N_{j^*}|-|T|-1)!}{|N_{j^*}|!},
\]
which collapses the global Shapley formula to the local one:
\[
\phi_i(\tilde{v}_{j^*})
\;=\;
\sum_{T \subseteq N_{j^*}\setminus\{i\}}
\frac{|T|!\,(|N_{j^*}|-|T|-1)!}{|N_{j^*}|!}
\bigl[v_{j^*}(T\cup\{i\}) - v_{j^*}(T)\bigr]
\;=\;
\phi_i^{N_{j^*}}(v_{j^*}).
\]
Thus $\phi_i(v) = \phi_i^{N_{j^*}}(v_{j^*})$: each player's global attribution equals its attribution within its own block. This decomposition holds for \emph{any} block-additive game, convex or not; convexity is only needed to guarantee that each block induces a laminar hierarchy with non-negative Harsanyi dividends. The deeper challenge is therefore three-fold: (i)~enforcing block-additivity, which is unlikely to emerge from standard training and must be imposed architecturally or via regularization; (ii)~preserving convexity within each block so that the resulting trees remain interpretable; and (iii)~discovering the partition $\{N_j\}$ from data. We leave the joint realization of these three desiderata as an exciting direction for future work.

\paragraph{Transfer Learning and Multi-objective Learning.} The additivity of Shapley values raises a natural question: can two value functions trained on different datasets or under different loss objectives be directly composed into a single attribution? When such direct composition fails, an equally informative question is to characterize the gap---the representational discrepancy between the constituent value functions, expressed through their induced Shapley values. A principled answer would clarify when transfer and multi-objective training of context value functions reduce to a Shapley-additive combination, and when they instead require explicit interaction modeling between the underlying objectives.

\paragraph{Designing Context Management System} The ``Landscape of Context'' perspective (Appendix~\ref{sec:landscape}) suggests a new design for context management systems in RAG. Instead of treating retrieved sentences as a flat sequence, we may organize them into the hierarchical structure induced by the laminar families of a convex context game, with Shapley values and Harsanyi dividends quantifying per-node synergy. This may allow a system to dynamically insert newly retrieved sentences into the existing hierarchy, evaluating and pruning them based on their localized contribution within the tree rather than via a flat ranking. We expect the main practical benefit to be amortization via decomposition, perhaps utilizing additivity of Shapley values across the forest of value functions $\{v_i\}$ defined in Appendix~\ref{sec:landscape}.

\section{Simulated Real World Case Study for Context Reranking: A Research Scenario in Cognitive Neuroscience}

We construct a real-world case in cognitive neuroscience to demonstrate how context attribution behaves under multimodal data fusion. The scenario involves an fMRI--EEG concurrent experiment on working-memory load. This case generated by GPT-5.4 \cite{openai2026gpt54} is based on the authors' real-world experience in neuroimaging research, and is designed to illustrate the practical challenges and decision-making processes that arise during the preparation, acquisition, and analysis phases of a complex multimodal experiment. The dialogue captures the interactions between a principal investigator (PI), a PhD student, a research assistant (RA), and an engineer, highlighting how technical considerations, data quality issues, and methodological choices are navigated in a real-world research setting. The case study is structured into three phases: preparation, acquisition and incident, and analysis, with each phase containing specific dialogues that reflect the critical moments in the experimental workflow.

Table~\ref{tab:casestudy_neuro_setup} records the preparation phase; Table~\ref{tab:casestudy_neuro_acq} covers acquisition and the incident; Table~\ref{tab:casestudy_neuro_analysis} contains the methodological discussion and research question.

\begin{table}[t]
\centering
\small
\caption{Neuroimaging case study---Phase 1: Preparation.}
\label{tab:casestudy_neuro_setup}
\begin{tabular}{p{2.2cm}p{11.5cm}}
\toprule
\cellcolor{yellow!15}\textbf{Role} & \cellcolor{yellow!15}\textbf{Dialogue (condensed)} \\
\midrule
\textbf{PI} & We need clean $\alpha$ and $\theta$ signals for the haemodynamic--neuroelectric coupling analysis. Data quality is the priority. \\
\textbf{PhD Student} & Subject screening is complete. Shall we use the 32-channel wet-electrode cap? Impedance will be more stable against RF interference, though prep time extends to 40~min. \\
\textbf{PI} & Use wet electrodes. Time is not the issue. \\
\textbf{RA (Zhang)} & Impedances are below 5\,k$\Omega$. Cz, Pz, and Fz are stable at $\sim$3\,k$\Omega$, but Oz hovers at 8\,k$\Omega$---probably insufficient conductive paste through the hair. \\
\textbf{PhD Student} & Re-apply paste at Oz with a syringe; follow the international 10--20 system strictly. Any misplacement in the occipital region will bias the visual working-memory localisation. \\
\textbf{Engineer} & Scanner ready. Eddy-current compensation and gradient warm-up are done. Magnet centre frequency drifted from 123.25 to 123.31~MHz; I re-ran auto-shimming and now homogeneity is within 5~ppm. Suggest a localiser first. \\
\textbf{PI} & Run the three-plane localiser. Then do the audio--visual reaction test; remind the subject to press the emergency call button if they feel any discomfort. \\
\textbf{RA} & Subject briefed on the 2-back task. Optical fibre response pads are fitted (EM-safe). Practice rounds completed. \\
\bottomrule
\end{tabular}
\end{table}

\begin{table}[t]
\centering
\small
\caption{Neuroimaging case study---Phase 2: Acquisition and incident.}
\label{tab:casestudy_neuro_acq}
\begin{tabular}{p{2.2cm}p{11.5cm}}
\toprule
\cellcolor{yellow!15}\textbf{Role} & \cellcolor{yellow!15}\textbf{Dialogue (condensed)} \\
\midrule
\textbf{PhD Student} & Heart rate 75, BP 120/80. For sync, I will use an opto-isolated pulse triggered by the MRI TR to start EEG acquisition---millisecond precision. EEG sampling rate: 2000\,Hz? \\
\textbf{PI} & 2000\,Hz. We need high sampling for time--frequency and phase--amplitude coupling analyses. Record event markers for task onset, stimulus onset, and response times. \\
\textbf{Engineer} & Localiser done. Starting T1-weighted 3D MPRAGE: TR=2300\,ms, TE=2.32\,ms, flip angle 8$^\circ$, 1\,mm isotropic. Now EPI: TR=2000\,ms, TE=30\,ms, 37 axial slices, 3\,mm thickness, FOV=220\,mm, 360 volumes, 12-min N-back task. \\
\textbf{RA} & Stimulus sequence running. Mean RT $\approx$650\,ms, accuracy $\approx$85\%. EEG stream stable. BCG artifact visible ($\pm$50\,$\mu$V), especially at T-wave. \\
\textbf{PhD Student} & Record raw data; remove artifacts offline. Monitor head motion---halt if translation $>$1\,mm or rotation $>$1$^\circ$. \\
\textbf{PI} & We target DLPFC and PPC activation under high WM load (3-back), and the relationship between BOLD and $\theta$--$\gamma$ coupling. No compromises on technical details. \\
\textbf{RA} & \textbf{Incident:} Subject reports claustrophobia at volume 210. HR spikes to 110. Severe motion artifact in both fMRI and EEG. \\
\textbf{PI} & Pause immediately. Comfort the subject. If they cannot continue, save data up to the current TR. \\
\textbf{PhD Student} & Stop sent. EEG ``interrupt'' marker inserted. Vitals safe. We have $\sim$7\,min of valid data: two full repeats of 1-back and 3-back blocks each. \\
\textbf{PI} & Sufficient for preliminary GLM. Export fMRI as NIfTI (dcm2niix, with slice-timing info), EEG as EDF, and alignment timestamps as CSV. \\
\textbf{Engineer} & Three EEG versions will be exported: raw, 0.1--100\,Hz band-pass, and 50-Hz notch-filtered. \\
\bottomrule
\end{tabular}
\end{table}

\begin{table}[t]
\centering
\small
\caption{Neuroimaging case study---Phase 3: Analysis discussion and research question.}
\label{tab:casestudy_neuro_analysis}
\begin{tabular}{p{2.2cm}p{11.5cm}}
\toprule
\cellcolor{yellow!15}\textbf{Role} & \cellcolor{yellow!15}\textbf{Dialogue (condensed)} \\
\midrule
\textbf{RA} & For multimodal fusion, should we extract EEG power envelopes via wavelet transform and cross-correlate with BOLD, or use DCM for effective connectivity? \\
\textbf{PI} & Use sliding-window correlation plus wavelet coherence first to find temporal coupling, then DCM for mechanistic interpretation. The key challenge is the temporal resolution mismatch---BOLD is seconds, EEG is milliseconds. \\
\textbf{PhD Student} & We can down-sample EEG to 0.5\,Hz to match BOLD, or deconvolve BOLD to infer neural events. But the latter assumes a fixed neurovascular coupling function across regions. \\
\textbf{PI} & Exactly. That is the core question: \textit{Does neurovascular coupling in DLPFC dynamically change with increasing working-memory load, thereby affecting fMRI-based network inference?} \\
\midrule
\multicolumn{2}{>{\columncolor{yellow!15}}l}{\textbf{Research Question}} \\
\midrule
\multicolumn{2}{p{13.7cm}}{%
Design a fusion pipeline that simultaneously tests (i) whether DLPFC BOLD increase is temporally coupled with $\theta$-band (4--8\,Hz) power rise when WM load increases from 1-back to 3-back, and (ii) whether this coupling strength predicts individual behavioural performance (RT and accuracy). Specify preprocessing, feature extraction, statistical testing, and correction for the temporal-resolution mismatch and multiple comparisons.
} \\
\bottomrule
\end{tabular}
\end{table}

\paragraph{SCP pruning diagnosis on the neuroimaging case.}
When the full dialogue (29 sentences) is fed to SCP, the model retains 12 sentences (40\% compression). Table~\ref{tab:neuro_scp_diagnosis} summarises the pruning outcome.

\begin{table}[h]
\centering
\small
\caption{SCP attribution ranking on the neuroimaging case study.}
\label{tab:neuro_scp_diagnosis}
\begin{tabular}{clcc}
\toprule
\cellcolor{yellow!15}\textbf{Rank} & \cellcolor{yellow!15}\textbf{Speaker} & \cellcolor{yellow!15}\textbf{Shapley} & \cellcolor{yellow!15}\textbf{Kept?} \\
\midrule
1  & PI   & $+$0.080 & \greencmark \; Emergency stop \\
2  & RA      & $+$0.050 & \greencmark \; Subject distress alert \\
3  & RA      & $+$0.010 & \greencmark \; Post-incident check \\
4  & Engineer & $-$0.070 & \greencmark \; Scanner status mid-run \\
5  & PhD     & $-$0.090 & \greencmark \; Experiment log \\
11 & PI   & $-$0.120 & \greencmark \; Sliding-window / wavelet \\
12 & RA     & $-$0.129 & \redxmark \; DCM vs.~wavelet query \\
25 & PI   & $-$0.184 & \redxmark \; GLM mention \\
\bottomrule
\end{tabular}
\end{table}

The result reveals a clear pattern: SCP, trained on MS MARCO, strongly favours \textit{action-oriented} sentences (emergency procedures, equipment checks) while deprioritising \textit{methodological discussion} (DCM, GLM, wavelet transforms). This aligns with the observation in Section~\ref{sec:feat-ext} that the value function is sensitive to concrete operational cues but less attuned to domain-specific methodological terminology. The retained sliding-window sentence (rank~11) survives only because it explicitly mentions ``temporal resolution mismatch''---a phrase structurally similar to the technical details seen in MS MARCO passages. This case therefore illustrates both a strength (robust handling of procedural urgency) and a limitation (methodological nuance is pruned) of the current SCP model, motivating the multi-layer pipeline proposed in Section~\ref{sec:top-k}.

\clearpage

\end{document}